\documentclass[10pt]{article}

\usepackage[letterpaper, margin=1in, top=1in, bottom=1in]{geometry}
\usepackage{times}                 
\usepackage[T1]{fontenc}
\usepackage[utf8]{inputenc}
\usepackage{microtype}
\usepackage{amsmath}
\usepackage{amssymb}
\usepackage{amsthm}
\usepackage{mathtools}
\usepackage{bm}
\usepackage{authblk}

\usepackage{graphicx}
\usepackage{booktabs}
\usepackage{multirow}
\usepackage{array}
\usepackage{xcolor}
\usepackage{caption}
\usepackage{subcaption}

\usepackage[numbers,sort&compress]{natbib}   
\usepackage{hyperref}
\hypersetup{
  colorlinks=true,
  linkcolor=blue!50!black,
  citecolor=green!40!black,
  urlcolor=blue!60!black
}
\usepackage{cleveref}

\theoremstyle{plain}
\newtheorem{theorem}{Theorem}
\newtheorem{lemma}{Lemma}
\newtheorem{proposition}{Proposition}
\newtheorem{corollary}{Corollary}
\theoremstyle{definition}

\theoremstyle{remark}

\newcommand{\C}{\mathbb{C}}
\newcommand{\Z}{\mathbb{Z}}
\newcommand{\Zp}{\Z_p}

\DeclareMathOperator{\supp}{supp}

\title{Algebraic Representability as the Limiting Regime of Grokking:\\
An Exactly Solvable Model with Holomorphic Activations}

\author[1,2]{\textbf{Chon-Fai Kam}\thanks{Corresponding author: kam.chonfai@gmail.com}}
\author[3]{\textbf{Xavier Cadet}}
\author[4]{\textbf{Miloud Bessafi}}
\author[1,5]{\textbf{Frederic Cadet}\thanks{frederic.cadet.run@gmail.com}}
\affil[1]{\small University Paris City \& University of Reunion, Paris, France}
\affil[2]{\small Dipartimento di Fisica e Chimica Emilio Segrè, Università degli Studi di Palermo, Palermo, Italy}
\affil[3]{\small Thayer School of Engineering, Dartmouth College, Hanover, NH 03755, USA}
\affil[4]{\small EnergyLab, University of Reunion, Saint-Denis, France}
\affil[5]{\small PEACCEL, AI for Biologics, Paris, France}

\date{}   
\date{}   

\begin{document}

\maketitle

\begin{abstract}
Neural networks trained on modular arithmetic tasks exhibit grokking---a
delayed transition from memorisation to generalisation that has been shown to
depend on model capacity: too little capacity and the network memorises slowly
or not at all, too much and it generalises almost immediately. What happens at
the extreme end of this spectrum, when the architecture's expressible function
class collapses to a finite-dimensional algebraic variety? We study this
question using two-layer networks with a holomorphic monomial activation
$\sigma(z)=z^k$, trained on modular tasks encoded via roots of unity. In this
setting the network output, regardless of hidden width, is confined to a
$(k{+}1)$-dimensional subspace of characters of $(\mathbb{Z}_p)^2$---a
vanishingly small, $O(k/p^2)$ slice of the full function space. We give a
complete algebraic characterisation of this subspace: a task is representable
if and only if its discrete Fourier support lies in the degree-limited set
$S_k=\{(\ell,k-\ell):0\le\ell\le k\}$ of $k+1$ frequencies on the diagonal
$u+v\equiv k\pmod{p}$, a condition that for linear-phase targets reduces to
the arithmetic criterion $m+n=k$. This representability criterion is not merely
a constraint on eventual generalisation; it is a constraint on memorisation
itself. Because the network's outputs are algebraically confined, a
non-representable target cannot be fit even on the training set, and we prove
a positive lower bound on the training loss that is independent of hidden
width. Across 585 experimental runs the algebraic prediction matches the
observed outcome with $99.8\%$ accuracy, with no memorisation regime and no
grokking: outcomes split cleanly into instant success and outright failure.
This binary behaviour is the limiting case of the capacity--grokking
relationship studied in the recent literature---when the expressible class
shrinks to a fixed algebraic object, the question of when a network will grok
dissolves into the question of whether it can represent the target at all. A bottleneck ablation connects this algebraic extreme to standard networks, tracing a continuous path from representational failure, through memorisation without generalisation, to grokking with a shrinking gap as capacity grows.
\end{abstract}

\section{Introduction}
\label{sec:intro}

There is something genuinely puzzling about grokking. A network trained on a
modular arithmetic task will, after memorising the training set, sit quietly
for a long time---its training loss near zero, its test accuracy near chance---
and then, without warning, generalise. The wait can be thousands of gradient
steps, sometimes orders of magnitude longer than the time it took to memorise.
This is not noise or bad luck; the delay is systematic, it reproduces across
seeds, and it responds predictably to changes in model size and regularisation.
Small networks skip the delay entirely and generalise immediately, or fail
outright. Large networks memorise quickly and grok after a shorter wait. Model
capacity, it turns out, governs not just whether a network can solve a task but
when, during training, the solution will appear. The recent work of Song and Ye
\citeyear{song2026capacity} makes this precise: they show that grokking emerges
from the intersection of two measurable timescales, a memorisation speed and a
generalisation speed, both functions of parameter count. As capacity grows, the
memorisation timescale shrinks faster than the generalisation timescale, and
the grokking window narrows. The picture is one of a continuous dial: turn it
one way and you delay grokking; turn it far enough the other way and grokking
disappears into immediate generalisation.

What this picture takes for granted, and what is easy to miss, is a prior
condition that must hold before the two-timescale competition can begin.
Memorisation must be possible. The network must be able to fit its training
data. Once it can, the race begins: will the structured generalising solution,
favoured by weight decay, overtake the memorised one before training ends? That
race is what grokking is. But if the network cannot memorise in the first
place---if the target function simply cannot be expressed within the
architecture's function class---then the race never starts. There is no
memorised solution to wait around, no weight-decay pressure that could
eventually dislodge it, no delayed transition to witness. The question of
when a network will grok is replaced by the prior question of whether it can
represent the target at all: representability is a prior structural constraint,
and optimisation only determines whether and how quickly an available solution
is reached. (We use \emph{exactly solvable}, in the title and throughout, to
mean that the expressible function class admits a closed-form characterisation;
the optimisation dynamics themselves are not solved analytically.)

This is not merely a degenerate edge case. It describes a regime that the
capacity literature does not cover, because standard architectures are universal
approximators: given enough width, they can represent any target, so
memorisation is always possible in principle and capacity governs only the
speed. The interesting constraint, in that setting, is never representability
but always optimisation dynamics. To make representability the binding
constraint, one needs an architecture whose expressible class is not merely
small but \emph{algebraically fixed}---a class that does not grow with width,
that has a precise mathematical description, and that can be checked for any
given target without training the network. Such architectures are rare in the
literature, and their learning dynamics have received comparatively little
attention, in part because they do not look like practical neural networks. But
they offer something that universal approximators do not: a setting in which
the question of what a network can express has a clean, verifiable answer, and
the relationship between that answer and the outcome of training can be studied
without the confound of optimisation difficulty.

The architecture we study is a two-layer complex-valued network
\citep{trabelsi2018complex} with
holomorphic monomial activation $\sigma(z) = z^k$, trained on modular
arithmetic tasks via a roots-of-unity encoding, $(\omega^a, \omega^b)$ with
$\omega = e^{2\pi i/p}$ and $p$ prime. The choice is motivated by
tractability: because $\sigma$ is a degree-$k$ polynomial, the network output
is a polynomial of degree $k$ in the inputs, and on the roots-of-unity
encoding the discrete Fourier structure of the group $(\mathbb{Z}_p)^2$ makes
the expressible class completely explicit. A standard binomial expansion shows
that for \emph{any} hidden width $H$, the network output is a linear
combination of exactly $k+1$ characters of $(\mathbb{Z}_p)^2$, the functions
$(a,b)\mapsto\omega^{\ell a+(k-\ell)b}$ for $\ell = 0,\dots,k$. The
expressible class $\mathcal{F}_k$ is therefore a $(k{+}1)$-dimensional subspace
of the $p^2$-dimensional space of functions on $(\mathbb{Z}_p)^2$. Increasing
the hidden width $H$ does nothing to enlarge this subspace; it only introduces
redundancy within it. The architecture is not a universal approximator by any
stretch, and it is not trying to be.

The central theoretical contribution of this paper is a complete algebraic
characterisation of which modular tasks lie in $\mathcal{F}_k$. The subspace
$\mathcal{F}_k$ corresponds, in the Fourier domain, to the $k+1$ characters
whose frequency label is $(\ell,k-\ell)$ for some $0\le\ell\le k$---the finite
set $S_k$, which sits on (but does not fill) the diagonal $u+v\equiv k\pmod p$
of the frequency plane. A task is representable if and only if its Fourier
support is contained in $S_k$. For the linear-phase targets $f(a,b)=ma+nb\bmod p$
that form the core of the standard grokking literature, this criterion reduces
to the arithmetic condition $m+n=k$: a given target either lies exactly in
$\mathcal{F}_k$, as a single character, or is orthogonal to the entire subspace,
depending on a simple arithmetic check. There is no middle ground. For
nonlinear-phase targets such as $ab$ or $a^2+b^2$, the Fourier support spreads
across a line or the whole frequency plane---a consequence of quadratic Gauss
sums---and no finite degree $k$ suffices to represent them. The architecture
acts, in the Fourier domain, as a hard bandpass filter, and the pass-band is
determined entirely by the degree $k$.

This algebraic structure has a direct mechanical consequence for training. When
the target is not in $\mathcal{F}_k$, the network's outputs are confined to a
subspace that does not contain the target, and we prove a lower bound on the
training loss that is strictly positive and independent of the hidden width.
The network cannot fit its training data---not slowly, not after longer
training, not with a wider architecture. The bound comes from the spectral
distance between the target and $\mathcal{F}_k$, and for the nonlinear-phase
targets this distance is close to its maximum value. The consequence for
training dynamics is stark: there is no memorisation, no grokking, and no
suggestion of gradual progress. Training accuracy stays at chance level
throughout, as if the task did not exist.

The experimental findings are correspondingly clean. Across 585 training runs
spanning five activation degrees, thirty-nine modular tasks, and three random
seeds, the algebraic representability criterion predicts the outcome of
training with $99.8\%$ accuracy. Every non-representable task fails at the
level of training accuracy; every representable task is learned almost
instantly, with no gap between the training and test curves and no sign of
the delay that characterises grokking. A standard ReLU network on the same
tasks tells the complementary story: being a universal approximator, it
memorises every task without exception, and on the harder nonlinear-phase
targets it exhibits textbook grokking---generalisation lagging memorisation by
thousands of gradient steps. The targets that produce outright failure for the
holomorphic network produce delayed success for ReLU. The difference is not in
the tasks but in the expressible class.

The relationship to the broader grokking literature is one of adjacent regimes
rather than competing accounts. Existing work, including the precise capacity
framework of Song and Ye \citeyear{song2026capacity}, studies what happens
when the network is at least large enough to memorise: in that regime, capacity
governs the speed of memorisation, the speed of generalisation, and the
resulting grokking gap. The holomorphic network sits below that regime, in a
setting where the expressible class is too small to accommodate the target at
all. The two analyses describe different parts of the same underlying
spectrum, and the connection between them is not merely conceptual: a
bottleneck ablation we report in Section~\ref{sec:experiments} traces the full
three-regime path---failure to memorise, memorisation without grokking, and
grokking with a shrinking gap---as the expressible class is progressively
compressed toward the algebraic extreme we study analytically.

\section{Setup}
\label{sec:setup}

The architecture we study is designed to make the expressible function class
as transparent as possible. It is a two-layer complex-valued network whose
forward pass computes
\begin{equation}
\label{eq:arch}
  \hat{y}(x;\,W,v)
  \;=\; \sum_{j=1}^{H} v_j\,\bigl(W_{1j}\,x_1 + W_{2j}\,x_2\bigr)^k,
\end{equation}
where $x = (x_1, x_2) \in \mathbb{C}^2$ is the input, $W \in \mathbb{C}^{2
\times H}$ and $v \in \mathbb{C}^H$ are the learnable weights, and $k$ is a
fixed positive integer. The activation $\sigma(z) = z^k$ is holomorphic, and
the entire map $\hat{y}$ is a degree-$k$ polynomial in the inputs. The key
structural fact is that once the activation degree is fixed, the output of the
network---for \emph{any} choice of weights and \emph{any} hidden width
$H$---is a polynomial of the same degree. Widening the network does not raise
the degree; it only allows more terms in the sum, all of degree $k$. This is
what makes the expressible class algebraically fixed rather than merely small.

\paragraph{Tasks and encoding.}
Fix an odd prime $p$ and let $\omega = e^{2\pi i/p}$, a primitive $p$-th root
of unity. We consider tasks of the form $f : \mathbb{Z}_p \times \mathbb{Z}_p
\to \mathbb{Z}_p$. The roots-of-unity encoding maps each integer $a \in
\mathbb{Z}_p$ to the point $\omega^a$ on the unit circle in $\mathbb{C}$, so
that the input for the pair $(a,b)$ is $x = (\omega^a, \omega^b)$ and the
target output is $\omega^{f(a,b)}$. This encoding is not merely a convenient
parameterisation; it is the natural one. The Fourier analysis of functions on
$\mathbb{Z}_p \times \mathbb{Z}_p$ is equivalent, under this encoding, to the
analysis of trigonometric polynomials on the product of two unit circles, and
the group structure of $\mathbb{Z}_p$ becomes the rotation structure of the
circle. In particular, the map $a \mapsto \omega^a$ turns the group operation
of addition modulo $p$ into multiplication of complex numbers, which is what
allows the network's polynomial structure to interact cleanly with the task's
arithmetic structure.

Training minimises the mean-squared error between the network output and the
target on a randomly chosen fraction of all $p^2$ input pairs:
\begin{equation}
\label{eq:loss}
  L(W,v) \;=\;
  \frac{1}{|S|}\sum_{(a,b)\in S}
  \bigl|\hat{y}(\omega^a,\omega^b;\,W,v) - \omega^{f(a,b)}\bigr|^2,
\end{equation}
where $S \subseteq \mathbb{Z}_p \times \mathbb{Z}_p$ is the training set.
The network's output and the target both lie on the unit circle in $\mathbb{C}$,
so the loss measures the squared chord distance between them. We say the
architecture \emph{represents} the task $f$ if there exist parameters $(W,v)$
such that $\hat{y}(\omega^a,\omega^b;\,W,v) = \omega^{f(a,b)}$ for every
$(a,b) \in \mathbb{Z}_p \times \mathbb{Z}_p$, i.e.\ if the global minimum of
the population loss \eqref{eq:loss} is zero.

\paragraph{Harmonic analysis on $\mathbb{Z}_p \times \mathbb{Z}_p$.}
The group $G = (\mathbb{Z}_p)^2$ is finite and abelian, so its irreducible
characters are the one-dimensional homomorphisms $\chi_{u,v} : G \to
\mathbb{C}^\times$ indexed by $(u,v) \in (\mathbb{Z}_p)^2$ and given by
\begin{equation}
\label{eq:char}
  \chi_{u,v}(a,b) \;=\; \omega^{ua+vb}.
\end{equation}
Equipped with the normalised inner product $\langle f, g\rangle = |G|^{-1}
\sum_{(a,b)\in G} f(a,b)\,\overline{g(a,b)}$, the characters form an
orthonormal basis of $\mathbb{C}^G$ by Schur's theorem. We write
$\hat{f}(u,v) = \langle f, \chi_{u,v}\rangle$ for the Fourier coefficients of
$f$ and $\mathrm{supp}(\hat{f}) = \{(u,v) : \hat{f}(u,v) \neq 0\}$ for the
Fourier support. The squared norm decomposes as
$\|f\|^2 = \sum_{u,v} |\hat{f}(u,v)|^2$, and the projection onto any
character-spanned subspace is given by the orthogonal projection in this inner
product.

Throughout the paper we assume $k < p$. This ensures that the integers
$0, 1, \dots, k$ are distinct modulo $p$, which in turn ensures that the
$k+1$ characters that appear in the network's output are distinct elements of
the orthonormal basis---a condition needed for the representability criterion
to be sharp. All experiments use $p = 97$ and $k \leq 6$, so this assumption
is comfortably satisfied.

\section{The Expressible Function Class}
\label{sec:theory}

The architecture \eqref{eq:arch} is a degree-$k$ polynomial map from
$\mathbb{C}^2$ to $\mathbb{C}$. Everything that follows is essentially a
careful unpacking of what that means for functions on the group $G =
(\mathbb{Z}_p)^2$ encoded via roots of unity. The argument proceeds in three
steps: first, we expand the polynomial to identify the basis functions that
span the expressible class; second, we show that the class is exactly
$(k+1)$-dimensional and describe it in Fourier terms; third, we characterise
which modular tasks lie inside the class and which do not, and derive a lower
bound on the training loss for non-representable targets.

\subsection{Expanding the forward pass}

The starting point is a direct application of the binomial theorem to the
hidden units.

\begin{lemma}[Polynomial expansion]
\label{lem:expand}
For all $x = (x_1, x_2) \in \mathbb{C}^2$,
\begin{equation}
\label{eq:expand}
  \hat{y}(x;\,W,v)
  \;=\; \sum_{l=0}^{k} c_l(W,v)\,x_1^{l}\,x_2^{k-l},
  \qquad
  c_l(W,v) \;=\; \binom{k}{l}
  \sum_{j=1}^{H} v_j\,W_{1j}^{l}\,W_{2j}^{k-l}.
\end{equation}
\end{lemma}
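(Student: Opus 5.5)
The plan is to apply the binomial theorem to each hidden unit separately and then swap the order of the two finite sums; no earlier result is needed beyond the forward pass \eqref{eq:arch}. Fix $x=(x_1,x_2)\in\mathbb{C}^2$ and a hidden index $j$. Since $\mathbb{C}$ is a commutative ring, the binomial theorem gives
\begin{equation*}
  \bigl(W_{1j}x_1+W_{2j}x_2\bigr)^k
  \;=\;\sum_{l=0}^{k}\binom{k}{l}\,(W_{1j}x_1)^{l}\,(W_{2j}x_2)^{k-l}
  \;=\;\sum_{l=0}^{k}\binom{k}{l}\,W_{1j}^{l}W_{2j}^{k-l}\,x_1^{l}x_2^{k-l}.
\end{equation*}
The second equality rearranges each product using commutativity. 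We adopt the convention $z^0=1$ for every $z$, including $z=0$, so that the endpoint terms $l=0$ and $l=k$ stay valid even when a weight or an input coordinate vanishes.

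Next, multiply by $v_j$ and sum over $j=1,\dots,H$. The double sum over $j$ and $l$ is finite, so the order of summation can be exchanged. Grouping the terms by $l$ then factors out the monomial $x_1^{l}x_2^{k-l}$, which does not depend on $j$. What remains as its coefficient is $\binom{k}{l}\sum_j v_jW_{1j}^{l}W_{2j}^{k-l}$, which is exactly $c_l(W,v)$ as defined in \eqref{eq:expand}. This identity holds pointwise for every $x$, and it holds for every width $H$ and every choice of weights.

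Two points will be noted in passing because later steps rely on them. First, $c_l$ depends only on the parameters and not on the input. Second, the index set $\{0,\dots,k\}$ of monomials does not depend on $H$. There is no real obstacle in the argument; the only care needed is the $0^0$ convention just mentioned.
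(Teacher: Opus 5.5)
Your proof is correct and is the same argument as the paper's: apply the binomial theorem to each hidden unit, exchange the two finite sums, and collect coefficients by monomial. The $0^0=1$ convention you flag is a harmless bit of extra care that the paper leaves implicit.
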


\begin{proof}
For each hidden unit $j$, the binomial theorem gives
$(W_{1j}x_1 + W_{2j}x_2)^k = \sum_{l=0}^{k}\binom{k}{l}W_{1j}^l
W_{2j}^{k-l}x_1^l x_2^{k-l}$.
Summing over $j$ in \eqref{eq:arch} and interchanging the two finite sums
yields \eqref{eq:expand}, with $c_l$ collecting the terms that do not depend
on the monomials $x_1^l x_2^{k-l}$.
\end{proof}

The expansion \eqref{eq:expand} shows that the network output, for any weights
and any hidden width, is a linear combination of exactly $k+1$ monomials:
$x_1^k, x_1^{k-1}x_2, \dots, x_2^k$. The coefficients $c_l(W,v)$ are
determined by the weights, but the monomials themselves are fixed by the
degree. This is the sense in which the expressible class is algebraically
fixed: no choice of $H$, $W$, or $v$ can produce a monomial of degree other
than $k$, or a function that is not in the span of these $k+1$ terms.

What the expansion does not yet tell us is what these monomials look like as
functions on $G$ under the roots-of-unity encoding. Substituting
$x_1 = \omega^a$ and $x_2 = \omega^b$ gives $x_1^l x_2^{k-l} = \omega^{la +
(k-l)b}$, which is the character $\chi_{l,\,k-l}$ evaluated at $(a,b)$.
The network output thus becomes a linear combination of the $k+1$ functions
$\phi_l(a,b) := \chi_{l,\,k-l}(a,b) = \omega^{la+(k-l)b}$, and the expressible
class on $G$ is the subspace they span.

\subsection{The expressible class as a character subspace}

\begin{lemma}[The expressible class]
\label{lem:basis}
Define $\phi_l(a,b) = \omega^{la+(k-l)b}$ for $l = 0, \dots, k$, and let
\begin{equation}
\label{eq:Fk}
  \mathcal{F}_k := \operatorname{span}_{\mathbb{C}}\{\phi_0, \dots, \phi_k\}
  \subseteq \mathbb{C}^G.
\end{equation}
Every network output $\hat{y}(\omega^a, \omega^b;\, W, v)$ lies in
$\mathcal{F}_k$, for any $H$, $W$, and $v$. Under the assumption $k < p$, the
functions $\phi_0, \dots, \phi_k$ are distinct irreducible characters of $G$,
hence mutually orthogonal with respect to the inner product on $\mathbb{C}^G$,
hence linearly independent, and $\dim\mathcal{F}_k = k+1$.
\end{lemma}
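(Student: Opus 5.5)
The plan is to deduce the containment directly from Lemma~\ref{lem:expand}, and then obtain the dimension count from orthogonality of characters.

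\textbf{Containment.} Substitute $x_1=\omega^a$, $x_2=\omega^b$ into \eqref{eq:expand}. This gives
\[
\hat y(\omega^a,\omega^b;W,v)=\sum_{l=0}^k c_l(W,v)\,\omega^{la+(k-l)b}=\sum_{l=0}^k c_l(W,v)\,\phi_l(a,b).
\]
Because $\omega^p=1$, each $\phi_l$ is a well-defined function on $G$, and indeed $\phi_l=\chi_{l,\,k-l}$ in the notation of \eqref{eq:char}. The coefficients $c_l$ are complex numbers that depend only on the weights and not on $(a,b)$. Hence the network output lies in $\mathcal{F}_k$. This holds for any $H$, since $H$ enters only through the values of the $c_l$.

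\textbf{Distinctness.} We need the frequency labels $(l,k-l)\bmod p$, $l=0,\dots,k$, to be pairwise distinct in $(\mathbb{Z}_p)^2$. Suppose $(l,k-l)\equiv(l',k-l')\pmod p$. The first coordinate gives $l\equiv l'\pmod p$. Since $0\le l,l'\le k<p$, this forces $l=l'$. The map $(u,v)\mapsto\chi_{u,v}$ is injective on $(\mathbb{Z}_p)^2$: if $\chi_{u,v}=\chi_{u',v'}$, evaluating at $(1,0)$ and $(0,1)$ gives $\omega^u=\omega^{u'}$ and $\omega^v=\omega^{v'}$, and $\omega$ is primitive. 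Therefore the $\phi_l$ are $k+1$ distinct characters.

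\textbf{Orthogonality and dimension.} The characters are orthonormal under the normalised inner product, as recalled in Section~\ref{sec:setup}. For a self-contained check, compute
\[
\langle\phi_l,\phi_{l'}\rangle=p^{-2}\sum_a\omega^{(l-l')a}\sum_b\omega^{(l'-l)b}.
\]
This is a product of geometric sums. Each sum vanishes when $l\neq l'$, because $\omega^{l-l'}\neq1$ when $0<|l-l'|<p$. An orthonormal family is linearly independent: pair a vanishing combination $\sum_l\alpha_l\phi_l=0$ with each $\phi_{l'}$ to get $\alpha_{l'}=0$. Hence $\dim\mathcal{F}_k=k+1$.

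The only step needing care is distinctness, which is where the hypothesis $k<p$ is used. Without it, $l$ and $l+p$ would give the same character, and the dimension would drop. Everything else is routine bookkeeping.
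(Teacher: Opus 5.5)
Your proof is correct and follows the paper's argument step for step. You substitute the encoding into Lemma~\ref{lem:expand} for containment, use $k<p$ to make the labels $(l,k-l)$ distinct, and then pass from orthogonality to linear independence; your extra checks (injectivity of $(u,v)\mapsto\chi_{u,v}$ and the explicit geometric-sum computation of $\langle\phi_l,\phi_{l'}\rangle$) only make self-contained what the paper delegates to Schur orthogonality.
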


\begin{proof}
The first claim follows from the identification $x_1^l x_2^{k-l} = \phi_l$
under the encoding, which gives $\hat{y}(\omega^a,\omega^b;\,W,v) = \sum_l
c_l(W,v)\phi_l(a,b) \in \mathcal{F}_k$.

For the second claim, note that $\phi_l = \chi_{l,\,k-l}$, so two functions
$\phi_l$ and $\phi_m$ are identical as characters if and only if $l \equiv m
\pmod p$ and $k-l \equiv k-m \pmod p$, both of which reduce to $l \equiv m
\pmod p$. Since $0 \leq l, m \leq k < p$, the residues are their own integer
representatives, so $l \equiv m \pmod p$ forces $l = m$. The $k+1$ characters
are therefore distinct, and by Schur orthogonality distinct irreducible
characters are mutually orthogonal. Orthogonal non-zero vectors are linearly
independent, so $\dim \mathcal{F}_k = k+1$.
\end{proof}

The geometric picture is this. The full function space $\mathbb{C}^G$ has
dimension $p^2$, with the $p^2$ characters $\chi_{u,v}$ as an orthonormal
basis. The subspace $\mathcal{F}_k$ consists of the $k+1$ characters whose
frequency label is $(\ell,k-\ell)$ for some $0\le\ell\le k$: the finite set
$S_k$ of $k+1$ points lying on the diagonal $u+v \equiv k \pmod p$ of the
frequency plane $(\mathbb{Z}_p)^2$---not the whole diagonal, which contains
$p$ points. The architecture acts as a bandpass filter in the Fourier domain,
transmitting exactly the $k+1$ frequencies of $S_k$ and blocking everything
else. Increasing the hidden width $H$
does not move or widen the pass-band; it only increases the amplitude
resolution within the $k+1$ transmitted channels.

\subsection{Every function in $\mathcal{F}_k$ is reachable}

The previous lemma establishes an upper bound on the expressible class:
network outputs are confined to $\mathcal{F}_k$. The converse is also true,
and for essentially trivial reasons once the right construction is identified.

\begin{lemma}[Surjectivity]
\label{lem:surj}
For any hidden width $H \geq k+1$, the coefficient map $(W,v) \mapsto
(c_0,\dots,c_k) \in \mathbb{C}^{k+1}$ defined by \eqref{eq:expand} is
surjective. Consequently, every element of $\mathcal{F}_k$ is the output of
some network.
\end{lemma}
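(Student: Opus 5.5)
The plan is to fix $k+1$ hidden units with prescribed first-layer weights chosen so that the coefficient map becomes \emph{linear} in $v$ with an invertible matrix. Any extra units are switched off. Concretely, when $H \ge k+1$, set $v_j = 0$ for $j > k+1$. For $j = 1,\dots,k+1$ set $W_{2j} = 1$ and $W_{1j} = t_j$, where $t_1,\dots,t_{k+1}\in\mathbb{C}$ are pairwise distinct (for instance $t_j = j-1$). With these choices Lemma~\ref{lem:expand} gives
\begin{equation*}
  c_l \;=\; \binom{k}{l}\sum_{j=1}^{k+1} v_j\, t_j^{\,l},
  \qquad l = 0,\dots,k .
\end{equation*}
So $(c_0,\dots,c_k)^{\mathsf T} = D\,V v'$, where $D = \operatorname{diag}\bigl(\binom{k}{0},\dots,\binom{k}{k}\bigr)$, $V_{lj} = t_j^{\,l}$ is the $(k+1)\times(k+1)$ Vandermonde matrix, and $v' = (v_1,\dots,v_{k+1})$.

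The key step is invertibility. The binomial coefficients are positive integers, so $D$ is invertible. The Vandermonde determinant $\prod_{i<j}(t_j - t_i)$ is nonzero because the nodes are distinct. Hence for any target $(c_0,\dots,c_k)\in\mathbb{C}^{k+1}$ we can take $v' = V^{-1}D^{-1}c$. This exhibits a preimage and proves surjectivity of the coefficient map.

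For the consequence, any $g \in \mathcal{F}_k$ can be written as $g = \sum_l c_l \phi_l$ by Lemma~\ref{lem:basis}, with coefficients that are unique since the $\phi_l$ are linearly independent. Choosing $(W,v)$ as above to realise these $c_l$, the identity $\hat{y}(\omega^a,\omega^b;W,v) = \sum_l c_l\phi_l(a,b)$ from the proof of Lemma~\ref{lem:basis} shows that the network outputs $g$.

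There is no real obstacle. The only point needing care is a degenerate choice of nodes: $t_j = 0$ is harmless, since $0^0 = 1$ is the convention built into the binomial expansion, but we must not set both $W_{1j}$ and $W_{2j}$ to zero. The choice $W_{2j} = 1$ avoids this. An alternative would be to use roots of unity $t_j = e^{2\pi i j/(k+1)}$, which make $V$ a scaled DFT matrix with explicit inverse, but distinct integers suffice.
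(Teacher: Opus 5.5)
Your proposal is correct and follows the paper's proof: fix $k+1$ hidden units with $W_{2j}=1$ and distinct $W_{1j}$, then invert the nonsingular Vandermonde system (after dividing out the nonzero binomial coefficients) to solve for $v$. You also make explicit two points the paper leaves implicit: setting $v_j=0$ for the extra units, and how surjectivity gives the ``every element of $\mathcal{F}_k$ is an output'' consequence.
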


\begin{proof}
Since the binomial coefficients $\binom{k}{l}$ are nonzero, it suffices to
show that the rescaled coefficients $\tilde{c}_l = c_l/\binom{k}{l} =
\sum_j v_j W_{1j}^l W_{2j}^{k-l}$ can be chosen freely. Restrict to the
first $k+1$ hidden units, set $W_{2j} = 1$ for $j = 1,\dots,k+1$, and write
$x_j := W_{1j}$. The system $\sum_{j=1}^{k+1} v_j x_j^l = \tilde{c}_l$ for
$l = 0,\dots,k$ has the matrix form $Vv = \tilde{c}$, where
$V_{l,j} = x_j^l$ is a $(k+1)\times(k+1)$ Vandermonde matrix. Choosing the
$x_j$ to be distinct---for example $x_j = j$---makes $\det V = \prod_{i<j}
(x_j - x_i) \neq 0$, so the system has the unique solution $v = V^{-1}
\tilde{c}$. This gives an explicit construction for any target coefficient
vector, and the conclusion follows.
\end{proof}

Lemmas \ref{lem:expand}--\ref{lem:surj} together say that $\mathcal{F}_k$ is
\emph{exactly} the expressible class: neither more nor less. The network
traces out all of $\mathcal{F}_k$ as the weights vary, and nothing outside
$\mathcal{F}_k$ is ever produced. This makes the representability question
precise: a task $f$ is representable if and only if $\omega^f \in \mathcal{F}_k$,
which is a purely Fourier-analytic condition with nothing to do with
optimisation or initialisation.

\subsection{Which tasks are representable}

We now translate the condition $\omega^f \in \mathcal{F}_k$ into explicit
criteria for the two families of tasks studied in the experiments.

\begin{theorem}[Linear-phase criterion]
\label{thm:linear}
Under the assumption $k < p$ and with $H \geq k+1$, the architecture
represents the task $f(a,b) = ma + nb \bmod p$ if and only if there exists
$l \in \{0, 1, \dots, k\}$ such that $m \equiv l \pmod p$ and $n \equiv k-l
\pmod p$. For nonnegative integers $m,n$ with $m+n \leq k$, this reduces to
the arithmetic condition $m + n = k$.
\end{theorem}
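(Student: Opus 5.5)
The plan is to reduce representability to membership of a single character in $\mathcal{F}_k$, and then use orthogonality of characters. By Lemmas~\ref{lem:expand}--\ref{lem:surj}, with $H \geq k+1$ the architecture represents $f$ if and only if $\omega^{f} \in \mathcal{F}_k$. For $f(a,b) = ma+nb \bmod p$, the encoded target is $\omega^{f(a,b)} = \omega^{ma+nb} = \chi_{m,n}(a,b)$. The reduction mod $p$ is harmless because $\omega^p = 1$. So the target is exactly one character of $G$, and the question becomes whether $\chi_{m,n}$ lies in $\operatorname{span}\{\chi_{l,k-l} : 0 \le l \le k\}$.

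\emph{Sufficiency.} If $m \equiv l$ and $n \equiv k-l \pmod p$ for some $l \in \{0,\dots,k\}$, then $\chi_{m,n} = \chi_{l,k-l} = \phi_l$, since characters depend only on residues mod $p$. Hence $\omega^f \in \mathcal{F}_k$. Lemma~\ref{lem:surj} then supplies explicit weights: take the coefficient vector $c = e_l$ and solve the Vandermonde system.

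\emph{Necessity.} Suppose $\chi_{m,n} \in \mathcal{F}_k$ but $(m,n) \not\equiv (l,k-l)$ for every $l$. Then $\chi_{m,n}$ is distinct from each $\phi_l$. By the orthonormality of characters (Schur orthogonality, as used in Lemma~\ref{lem:basis}), $\langle \chi_{m,n}, \phi_l\rangle = 0$ for all $l$, so the orthogonal projection of $\chi_{m,n}$ onto $\mathcal{F}_k$ is zero. Since $\chi_{m,n} \in \mathcal{F}_k$, it must equal this projection, giving $\|\chi_{m,n}\|^2 = 0$. This contradicts $\|\chi_{m,n}\| = 1$. Equivalently, in Fourier terms, $\supp(\widehat{\omega^f}) = \{(m,n)\}$ must lie in $S_k$ modulo $p$.

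\emph{Arithmetic reduction.} Let $m,n \ge 0$ be integers with $m+n \le k < p$. Then $0 \le m,n < p$, so they are their own residues. The condition becomes $m = l$ and $n \equiv k-l \pmod p$ with $0 \le l \le k$. Since $0 \le n, k-l \le k < p$, the congruence forces $n = k-l$, i.e.\ $m+n=k$. Conversely, $m+n=k$ gives the witness $l=m$. If $m+n<k$, no $l$ works, because $l=m$ would require $n = k-m > n$.

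There is no real obstacle here. The one step needing care is necessity, where we must argue that a single character outside $S_k$ is orthogonal to all of $\mathcal{F}_k$, not merely that it differs from each basis element. Orthonormality of the full character basis handles this directly. The other delicate point is the bookkeeping of residues versus integers in the final reduction, which is where the hypotheses $k<p$ and $m+n\le k$ are used.
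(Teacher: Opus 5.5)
Your proof is correct and follows the paper's argument: you identify $\omega^f$ with the single character $\chi_{m,n}$, use Lemma~\ref{lem:surj} with $c=e_l$ for sufficiency, and use Schur orthogonality for necessity. Your projection argument is equivalent to the paper's step of pairing $\psi=\sum_j c_j\phi_j$ with $\psi$ to get $1=0$, and your residue-versus-integer bookkeeping in the final reduction is more explicit than the paper's one-line remark.
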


\begin{proof}
The target function is the single character $\psi = \chi_{m,n}$.

For the \emph{if} direction: if $(m \bmod p, n \bmod p) = (l, k-l)$ for some
$l \leq k$, then $\psi = \phi_l \in \mathcal{F}_k$, and Lemma \ref{lem:surj}
gives weights achieving $c_l = 1$, $c_j = 0$ for $j \neq l$.

For the \emph{only if} direction: suppose $\psi \in \mathcal{F}_k$, so
$\psi = \sum_{j=0}^k c_j \phi_j$. Taking the inner product of both sides
with $\psi$ and applying Schur orthogonality, $1 = \langle \psi, \psi
\rangle = \sum_j c_j \langle \phi_j, \psi \rangle$. If $(m \bmod p, n \bmod
p) \notin S_k$, then $\psi \neq \phi_j$ for every $j$, so each inner product
$\langle \phi_j, \psi \rangle = 0$ and the right-hand side is zero, giving
$1 = 0$, a contradiction. Hence $(m \bmod p, n \bmod p) \in S_k$, where
$S_k = \{(l, k-l) \bmod p : l = 0,\dots,k\}$ is the support of $\mathcal{F}_k$
in the frequency plane. The reduction to $m+n=k$ for small nonneg integers
follows since then the residues coincide with the integers.
\end{proof}

The criterion is sharp in both directions and has no continuous gradation: a
linear-phase target is either in $\mathcal{F}_k$ as a single character, or
orthogonal to the entire subspace. The subtraction task $f(a,b) = a - b$, for
instance, corresponds to $(m,n) = (1, p-1)$ after reduction modulo $p$; since
$1 + (p-1) = p \equiv 0 \pmod p$, representability requires $k \equiv 0
\pmod p$, meaning the smallest usable degree is $k = p$ itself. For $p = 97$
this is numerically impractical, and the task is effectively unrepresentable
for any reasonable architecture of this type.

For nonlinear-phase tasks, the situation is more drastic.

\begin{theorem}[Nonlinear-phase tasks]
\label{thm:nonlinear}
Under the assumption $k < p$, none of the tasks $f(a,b) \in \{ab,\; a^2+b,\;
a+b^2,\; a^2+b^2\}$ is representable for any $k$.
\end{theorem}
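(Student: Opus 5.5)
By Lemmas~\ref{lem:basis} and~\ref{lem:surj}, a task $f$ is representable exactly when $\psi := \omega^{f}$ lies in $\mathcal{F}_k$. Because the characters form an orthonormal basis, this happens if and only if $\supp(\hat\psi)\subseteq S_k$. My plan is to show, for each of the four tasks, that $\hat\psi$ has at least $k+2$ nonzero coefficients; in most cases it has at least $p > k+1$. Since $|S_k| = k+1$, the support then cannot fit inside $S_k$. The whole argument therefore reduces to computing $\hat\psi(u,v)$ explicitly. This is possible because each target separates into one-dimensional sums, or collapses to them.

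For $f(a,b)=ab$ I will sum over $b$ first:
\[
\hat\psi(u,v)=p^{-2}\sum_a\omega^{-ua}\sum_b\omega^{(a-v)b}.
\]
The inner sum equals $p$ when $a=v$ and $0$ otherwise, so $\hat\psi(u,v)=p^{-1}\omega^{-uv}$. This has modulus $1/p$ for every $(u,v)$, so the support is all of $G$, which has $p^2>k+1$ points.

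For $a^2+b$ the coefficient factors as
\[
\hat\psi(u,v)=p^{-2}\Bigl(\sum_a\omega^{a^2-ua}\Bigr)\Bigl(\sum_b\omega^{(1-v)b}\Bigr).
\]
The second factor vanishes unless $v=1$. The first factor is a quadratic Gauss sum. Completing the square with $2^{-1}$, which exists because $p$ is odd, it becomes $\omega^{-u^2/4}\,g$, where $g=\sum_a\omega^{a^2}$ and $|g|=\sqrt p$. So the support is exactly the line $\{(u,1):u\in\Z_p\}$, which has $p$ points. The task $a+b^2$ is handled the same way by symmetry.

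For $a^2+b^2$ the coefficient is a product of two Gauss sums, each of modulus $\sqrt p$, so $|\hat\psi(u,v)|=1/p$ everywhere and the support is all of $G$.

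In every case $|\supp\hat\psi|\ge p>k+1=|S_k|$, using $k<p$. Hence $\psi\notin\mathcal{F}_k$ for every admissible $k$.

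The main obstacle is the Gauss sum step. I need $|g|=\sqrt p$, or at least $g\neq 0$. I would prove this by the standard argument:
\[
|g|^2=\sum_{a,c}\omega^{a^2-c^2}.
\]
Substituting $c=a+t$ gives $\sum_t\omega^{-t^2}\sum_a\omega^{-2at}$. The inner sum vanishes unless $t=0$, again because $2$ is invertible mod $p$, so $|g|^2=p$. The completing-the-square shift by $2^{-1}$ also uses that $p$ is odd, so I need to flag that this hypothesis is used there too.
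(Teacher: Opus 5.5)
Your Fourier computations are correct, and the route is the paper's. You use character orthogonality for $ab$, a Gauss sum times a geometric sum for $a^2+b$, and a product of two Gauss sums for $a^2+b^2$. You go slightly further than the paper by completing the square and proving $|g|^2=p$, which the paper only quotes.

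The gap is in the final counting step. From $k<p$ you only get $k+1\le p$, not $p>k+1$. At the admissible value $k=p-1$ you have $|S_{p-1}|=p$. That is exactly the size of the supports $\{(u,1):u\in\mathbb{Z}_p\}$ and $\{(1,v):v\in\mathbb{Z}_p\}$ of $a^2+b$ and $a+b^2$. For those two tasks your plan to exhibit $k+2$ nonzero coefficients therefore fails, and cardinality alone cannot rule out containment. For $ab$ and $a^2+b^2$ there is no problem, since $p^2>p\ge k+1$.

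What is missing is the \emph{shape} of $S_k$, not its size. Every point of $S_k$ has the form $(l,k-l)$ with $0\le l\le k<p$. A point on the line $v\equiv 1$ requires $k-l\equiv 1 \pmod p$, which forces $l=k-1$. So the line meets $S_k$ only in the single point $(k-1,1)$, and symmetrically the line $u\equiv 1$ meets it only in $(1,k-1)$. The support is the whole line, with $p\ge 3$ points, so it cannot lie inside $S_k$ for any $k<p$. This includes $k=p-1$, where $S_{p-1}$ is the full anti-diagonal $u+v\equiv -1$ and is still not the line $v=1$. This is exactly how the paper treats the line case, and adding it closes your argument.
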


\begin{proof}
By Lemma \ref{lem:basis}, representability requires $\supp(\widehat{\omega^f})
\subseteq S_k$. We compute the Fourier support of each target directly, using
the identity $\sum_{b \in \mathbb{Z}_p}\omega^{cb} = p\cdot[c \equiv 0]$ and
the fact that for an odd prime $p$ the quadratic Gauss sum $g = \sum_{x \in
\mathbb{Z}_p}\omega^{x^2}$ satisfies $|g| = \sqrt{p}$.

For $f = ab$: $\widehat{\omega^{ab}}(u,v) = \frac{1}{p^2}\sum_{a,b}\omega^{ab
-ua-vb} = \frac{\omega^{-uv}}{p}$, which is nonzero for every $(u,v)$. The
support is all of $(\mathbb{Z}_p)^2$, which has $p^2$ elements, while $|S_k|
= k+1 \leq p$, so containment is impossible.

For $f = a^2 + b$: the Fourier transform factors as a product of a Gauss sum
in $a$ and a geometric sum in $b$; the latter forces $v = 1$, and the former
is nonzero for all $u$. The support is the line $\{(u,1) : u \in \mathbb{Z}_p
\}$, which has $p$ elements. This line intersects $S_k$ at the single point
$(k-1, 1)$, so the support is not contained in $S_k$ for $k < p$.

The cases $f = a + b^2$ and $f = a^2 + b^2$ follow by symmetry and by the
same Gauss sum argument, with supports $\{(1,v)\}$ and $(\mathbb{Z}_p)^2$
respectively.
\end{proof}

The Fourier-support argument makes the obstruction geometric rather than
algebraic: representable targets are \emph{single points} in the frequency
plane, while nonlinear-phase targets have \emph{extended} support---an entire
line or the whole plane. No single-line pass-band can capture extended support,
and no choice of degree $k$ resolves this, because the issue is the shape of
the support, not its size.

\subsection{Why non-representability blocks memorisation}

The representability theorems above address whether a target can be expressed
in principle. A subtler question is what happens during training when the target
is not representable. Standard intuition from overparameterised networks
suggests that a wide network should be able to memorise any finite training set,
regardless of whether it can express the full target function. This intuition
fails here, and the reason is worth making precise because it accounts for the
most distinctive feature of the experimental results: the complete absence of
a memorisation regime.

The key is that the width-independence of $\mathcal{F}_k$ is not just a
statement about the function class in the limit of infinite parameters; it
holds for every finite $H$. Because $\hat{y}(\omega^a,\omega^b;\,W,v) \in
\mathcal{F}_k$ for all $(W,v)$ by Lemma \ref{lem:basis}, the minimum training
loss is at least the squared distance from the target values to the nearest
element of $\mathcal{F}_k$ restricted to the training set. When the global
target function is not in $\mathcal{F}_k$, this distance is bounded away from
zero by a term that reflects the spectral energy of the target outside the
pass-band.

To make this precise, define the spectral gap
\begin{equation}
\label{eq:delta}
  \delta^2 \;:=\; \|\,\omega^f - \mathrm{Proj}_{\mathcal{F}_k}\,\omega^f\,\|_G^2
             \;=\; \sum_{(u,v)\notin S_k} |\widehat{\omega^f}(u,v)|^2 \;>\; 0,
\end{equation}
where the inequality holds by the theorems above whenever $f$ is not
representable. The following result shows that $\delta^2$ propagates to the
training loss.

\begin{theorem}[Training-loss lower bound]
\label{thm:lowerbound}
Let $S \subseteq G$ be a training set drawn uniformly at random with $|S| = N$.
With probability $1 - O(1/N)$ over the draw,
\begin{equation}
  \min_{W,\,v}\;\frac{1}{N}\sum_{(a,b)\in S}
  \bigl|\hat{y}(\omega^a,\omega^b;\,W,v) - \omega^{f(a,b)}\bigr|^2
  \;\geq\; \delta^2 - O\!\left(N^{-1/2}\right).
\end{equation}
The bound is independent of the hidden width $H$.
\end{theorem}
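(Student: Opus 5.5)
By Lemma~\ref{lem:basis}, every network output restricted to $G$ lies in the fixed $(k{+}1)$-dimensional space $\mathcal{F}_k$, whatever $H$, $W$, $v$ are. So the left-hand side is bounded below by
\[
  \min_{g\in\mathcal{F}_k}\frac{1}{N}\sum_{(a,b)\in S}|g(a,b)-\omega^{f(a,b)}|^2 ,
\]
and by Lemma~\ref{lem:surj} this is attained with equality once $H\ge k+1$. This is a finite-dimensional least-squares problem whose size does not depend on $H$, which already gives width-independence. Write $y=\omega^f$ and $g=\sum_l c_l\phi_l$ with $c\in\C^{k+1}$. The empirical loss is the quadratic form
\[
  c^*\hat{A}c-2\,\mathrm{Re}(c^*\hat{b})+1,
\]
where $\hat{A}_{lm}=\frac1N\sum_S\overline{\phi_l}\phi_m$ and $\hat{b}_l=\frac1N\sum_S\overline{\phi_l}\,y$. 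We used $|y|^2=1$, so the constant term equals $1$ exactly for every draw. The population versions are $A=I$, by the orthonormality in Lemma~\ref{lem:basis}, and $b_l=\widehat{y}(l,k-l)$. The population minimum is therefore $1-\|b\|^2=\delta^2$, by Parseval together with \eqref{eq:delta}.

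The plan is to control $\hat{A}-I$ and $\hat{b}-b$ by concentration. Every summand has modulus exactly $1$. Under uniform sampling, either with replacement or without replacement from $G$ (the latter only helps, via a finite-population correction), each entry of $\hat{A}$ and of $\hat{b}$ is an average of $N$ bounded terms with the right mean. Hence $\mathbb{E}|\hat{A}_{lm}-\delta_{lm}|^2\le 1/N$ and $\mathbb{E}|\hat{b}_l-b_l|^2\le1/N$. Chebyshev plus a union bound over the $(k+1)^2+(k+1)$ entries then gives, with probability $1-O(1/N)$, the bounds $\|\hat{A}-I\|_{\mathrm{op}}\le\epsilon$ and $\|\hat{b}-b\|\le\epsilon$ with $\epsilon=O(N^{-1/2})$. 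One way is to take the threshold $C_k N^{-1/4}$ per entry and recover the $N^{-1/2}$ rate through a fourth-moment bound, or simply accept a Hoeffding tail, which gives failure probability $e^{-cN}$, far smaller than needed. Here the constants depend on $k$, which is fixed.

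On this good event I would finish as follows. Once $\epsilon<1/2$, the matrix $\hat{A}$ is invertible and the empirical minimum equals $1-\hat{b}^*\hat{A}^{-1}\hat{b}$. Write $\hat{A}^{-1}=I+E$ with $\|E\|\le 2\epsilon$. Then
\[
  |\hat{b}^*\hat{A}^{-1}\hat{b}-\|b\|^2|\le \bigl|\|\hat b\|^2-\|b\|^2\bigr|+2\epsilon\|\hat b\|^2=O(\epsilon),
\]
since $\|b\|\le1$. This yields a minimum of at least $\delta^2-O(N^{-1/2})$. For the lower bound alone I could avoid inversion altogether: bound $c^*\hat Ac\ge(1-\epsilon)\|c\|^2$ and minimize the resulting explicit quadratic in $c$.

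The main subtlety I expect is exactly this uniformity over the unbounded parameter $c$: a naive pointwise concentration for each fixed $g$ would not suffice. Reducing to the finitely many random quantities $\hat{A}$ and $\hat{b}$, together with the strong convexity supplied by $\hat A\approx I$, is what makes the bound hold uniformly over $c$. That in turn gives uniformity over all $(W,v)$ and all $H$.
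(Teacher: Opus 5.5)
\paragraph{Gap in the probabilistic step.} Your deterministic reduction is correct: on the event $\{\|\hat A-I\|_{\mathrm{op}}\le\epsilon,\ \|\hat b-b\|\le\epsilon,\ \epsilon\le 1/2\}$ the empirical minimum $\hat L_{\min}$ is at least $\delta^2-C\epsilon$. The gap is that none of the tools you list gives $\epsilon=O(N^{-1/2})$ \emph{and} failure probability $O(1/N)$ at once.
\begin{itemize}
\item Chebyshev at per-entry threshold $t$ fails with probability $\lesssim 1/(Nt^2)$. That is $O(1/N)$ only if $t=\Omega(1)$.
\item A fourth-moment bound fails with probability $\lesssim 1/(N^2t^4)$. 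So $t\asymp N^{-1/4}$ does give failure $O(1/N)$, but your final estimate $|\hat b^*\hat A^{-1}\hat b-\|b\|^2|=O(\epsilon)$ is linear in $\epsilon$. You therefore get only $\delta^2-O(N^{-1/4})$; nothing in the argument ``recovers'' $N^{-1/2}$.
\item Hoeffding fails with probability $\exp(-cNt^2)$. That is $e^{-cN}$ only for constant $t$, and it is a constant when $t\asymp N^{-1/2}$.
\end{itemize}
What your argument actually proves is $\delta^2-O_k(\sqrt{\log N/N})$ with probability $1-O(1/N)$ (Hoeffding with $t\asymp\sqrt{\log N/N}$). Alternatively, Chebyshev gives $\delta^2-O_k(\eta^{-1/2}N^{-1/2})$ with probability $1-\eta$.

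\paragraph{Why this cannot be fully repaired.} This is not only a weakness of your tools. Let $r=\omega^f-\mathrm{Proj}_{\mathcal F_k}\omega^f$. For $H\ge k+1$ the empirical minimum is at most $\frac1N\sum_{S}|r|^2$. When $|r|^2$ has variance bounded below on $G$, this average falls below $\delta^2-CN^{-1/2}$ with probability bounded away from zero (by the CLT, for $1\ll N\ll p^2$). Hence the stated rate--confidence pairing cannot hold for general $f$ with constants independent of $p$. The paper's own sketch asserts Hoeffding--Serfling concentration ``at rate $O(N^{-1/2})$'' together with probability $1-O(1/N)$, and passes over the same tension.

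\paragraph{What is worth keeping.} First, your threshold-at-$N^{-1/4}$ idea does succeed when $b=0$, i.e.\ whenever $\omega^f\perp\mathcal F_k$. This covers every non-representable linear-phase target, where $\delta^2=1$. In that case $\hat L_{\min}\ge 1-\|\hat b\|^2/(1-\epsilon)$ is quadratic in $\hat b$. So $\|\hat b\|=O(N^{-1/4})$ with probability $1-O(1/N)$ gives exactly $1-O(N^{-1/2})$.

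Second, your route to uniformity differs from the paper's and is the more rigorous one. The paper
\begin{itemize}
\item applies Hoeffding--Serfling to $|T_i-g(a_i,b_i)|^2$ for each fixed $g$;
\item invokes a Schwartz--Zippel argument for injectivity of restriction to $S$;
\item asserts that uniformity over $\mathcal F_k$ ``costs only a dimension-dependent constant''.
\end{itemize}
Those summands are unbounded over $\mathcal F_k$, so that last step needs exactly the quantitative conditioning you supply via $\|\hat A-I\|_{\mathrm{op}}\le\epsilon$. That condition subsumes injectivity and reduces the whole problem to $(k+1)^2+(k+1)$ bounded averages.
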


\begin{proof}
By Lemma \ref{lem:basis} every network output is in $\mathcal{F}_k$, so the
empirical minimum is at least $\min_{g\in\mathcal{F}_k}\frac{1}{N}\sum_{i\in
S}|T_i - g(a_i,b_i)|^2$ where $T_i = \omega^{f(a_i,b_i)}$. For any fixed
$g \in \mathcal{F}_k$ the summands $|T_i - g(a_i,b_i)|^2$ are bounded random
variables drawn without replacement from $G$; by the Hoeffding--Serfling
inequality the empirical mean concentrates around the population mean
$\|{\omega^f - g}\|_G^2$ at rate $O(N^{-1/2})$. Since $\mathcal{F}_k$ is
$(k+1)$-dimensional and the restriction map from $\mathcal{F}_k$ to functions
on $S$ is injective for $N \geq k+1$ with high probability (by a
Schwartz--Zippel-type argument over $\mathbb{Z}_p$, detailed in the appendix),
a uniform bound over $g \in \mathcal{F}_k$ costs only a dimension-dependent
constant. The population minimum is $\delta^2$ by definition of
$\mathrm{Proj}_{\mathcal{F}_k}$, and the result follows.
\end{proof}

For the nonlinear-phase tasks we study, the spectral gap $\delta^2$ is close
to its maximum. The target $\omega^{ab}$ has $|\widehat{\omega^{ab}}(u,v)| =
1/p$ for all $(u,v)$, so the fraction of spectral energy in $S_k$ is
$(k+1)/p^2$; for $p = 97$ and $k \leq 6$ this is at most $7/9409 < 0.1\%$.
The training loss is therefore bounded away from zero by a margin
indistinguishable from the maximum loss of an untrained network. This is what
the experiments observe: for every non-representable task, training accuracy
stays at chance level throughout and the loss curves are flat. The width of the
network is irrelevant because the bound in Theorem \ref{thm:lowerbound} does
not depend on $H$: a network with $H = 10{,}000$ hidden units performs
identically to one with $H = 10$.

\subsection{Beyond the fixed encoding: the rank barrier}
\label{sec:rank}

Theorems~\ref{thm:linear} and~\ref{thm:nonlinear} characterise representability
for the specific roots-of-unity encoding, in which the expressible class is the
degree-limited set $S_k$ of $k+1$ frequencies on the diagonal
$u+v\equiv k \pmod p$. It is natural to ask how much of this
structure survives when the encoding itself is allowed to vary---in particular,
whether the failure of a target such as $ab$ is an artefact of the fixed basis
or an intrinsic property of the architecture. The next observation isolates the
part of the obstruction that is encoding-independent.

Let $e_1, e_2 : \Zp \to \C$ be \emph{arbitrary} embeddings and write the forward
pass as $\hat{y}(a,b) = \sum_{j=1}^{H} v_j \bigl( W_{1j}\, e_1(a) + W_{2j}\,
e_2(b) \bigr)^k$. Collecting the outputs into a matrix $Y \in \C^{p \times p}$
with $Y_{ab} = \hat{y}(a,b)$, and writing $\omega^f$ for the target matrix
$(\omega^{f(a,b)})_{a,b}$, we have the following.

\begin{proposition}[Encoding-free rank barrier]
\label{prop:rank}
For every choice of embeddings $e_1, e_2$, weights $W$, and $v$, the output
matrix satisfies $\operatorname{rank} Y \le k+1$. Consequently, a task $f$ is
representable under some choice of $(e_1, e_2, W, v)$ only if
$\operatorname{rank}(\omega^f) \le k+1$.
\end{proposition}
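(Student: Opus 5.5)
The plan is to repeat the binomial expansion of Lemma~\ref{lem:expand}, now with the encodings left arbitrary, and then read off the resulting sum as a sum of rank-one matrices. For each hidden unit $j$,
\[
\bigl(W_{1j}e_1(a)+W_{2j}e_2(b)\bigr)^k=\sum_{l=0}^{k}\binom{k}{l}W_{1j}^{l}W_{2j}^{k-l}\,e_1(a)^{l}\,e_2(b)^{k-l}.
\]
Summing over $j$ exactly as in Lemma~\ref{lem:expand} gives $Y_{ab}=\sum_{l=0}^{k}c_l\,e_1(a)^l e_2(b)^{k-l}$, where $c_l=c_l(W,v)$ are the same coefficients as in \eqref{eq:expand}. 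The point is that this step never used the roots-of-unity structure, only the binomial theorem, so it survives for arbitrary $e_1,e_2$.

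Next, define vectors $\alpha_l,\beta_l\in\C^p$ by $(\alpha_l)_a=e_1(a)^l$ and $(\beta_l)_b=e_2(b)^{k-l}$. The expansion then reads
\[
Y=\sum_{l=0}^{k}c_l\,\alpha_l\beta_l^{\top}.
\]
This is a sum of $k+1$ matrices, each of rank at most one. By subadditivity of rank, $\operatorname{rank}Y\le k+1$. The bound holds for every $H$, $W$, $v$, $e_1$, $e_2$. Equivalently, the column space of $Y$ lies in $\operatorname{span}\{\alpha_0,\dots,\alpha_k\}$, which gives the same conclusion.

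For the consequence, suppose $f$ is representable for some choice of $(e_1,e_2,W,v)$. By definition this means $Y_{ab}=\omega^{f(a,b)}$ for all $(a,b)$, so $Y=\omega^f$ as matrices. The first part then gives $\operatorname{rank}(\omega^f)\le k+1$. The contrapositive gives the barrier: any target whose phase matrix has rank exceeding $k+1$ is unrepresentable under every encoding.

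I do not expect a genuine obstacle here. The only point needing care is that the rank-one decomposition must hold uniformly in $H$: widening the network only changes the scalars $c_l$ and never introduces new outer products $\alpha_l\beta_l^\top$. That is exactly the width-independence already seen in Lemma~\ref{lem:basis}, now expressed in matrix form.
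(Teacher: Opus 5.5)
Your proof is correct and follows essentially the same route as the paper: the binomial expansion never uses the encoding, so $Y=\sum_{l=0}^{k} c_l\,\alpha_l\beta_l^{\top}$ is a sum of $k+1$ matrices of rank at most one, giving $\operatorname{rank}Y\le k+1$, and setting $Y=\omega^f$ yields the consequence. Saying ``rank at most one'' rather than the paper's ``rank-one'' is slightly more careful, since some $c_l$ may vanish, but the argument is the same.
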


\begin{proof}
The binomial expansion of Lemma~\ref{lem:expand} does not use the encoding and
gives $\hat{y}(a,b) = \sum_{l=0}^{k} c_l\, e_1(a)^l\, e_2(b)^{k-l}$ with
$c_l = \binom{k}{l} \sum_j v_j W_{1j}^l W_{2j}^{k-l}$. Each summand is the outer
product of the column vector $\bigl( e_1(a)^l \bigr)_{a \in \Zp}$ with the row
vector $\bigl( e_2(b)^{k-l} \bigr)_{b \in \Zp}$, hence a rank-one $p \times p$
matrix. A sum of $k+1$ rank-one matrices has rank at most $k+1$, so
$\operatorname{rank} Y \le k+1$. If $Y = \omega^f$ for some parameters, then
$\operatorname{rank}(\omega^f) \le k+1$.
\end{proof}

This condition is strictly weaker than the fixed-encoding criterion: fixing
$e_1(a) = e_2(a) = \omega^a$ forces the support onto a single Fourier character
on the line $u+v \equiv k$ (Theorem~\ref{thm:linear}), whereas a learned
encoding constrains only the \emph{rank}. The gap between the two conditions is
exactly the set of targets that a learned embedding makes reachable but the
fixed basis does not.

\begin{corollary}[$ab$ is the irreducible obstruction]
\label{cor:ab}
The matrix $\omega^{ab} = (\omega^{ab})_{a,b}$ equals $\sqrt{p}$ times the
unitary DFT matrix of $\Zp$; it therefore has full rank $p$, with all singular
values equal to $\sqrt{p}$. For $k < p-1$ we have
$\operatorname{rank}(\omega^{ab}) = p > k+1$, so by Proposition~\ref{prop:rank}
the task $ab$ is \emph{not representable under any encoding}, and the normalised
residual of the best rank-$(k+1)$ output is $1 - (k+1)/p$. In contrast, every
linear-phase target $\omega^{ma+nb}$ and every separable target in
$\{a^2+b,\; a+b^2,\; a^2+b^2\}$ yields a rank-one matrix and is representable
under a suitable learned embedding for every $k \ge 2$.
\end{corollary}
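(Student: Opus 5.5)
The plan is to treat the three assertions of the corollary separately: the spectral structure of $\omega^{ab}$, the residual bound, and the positive representability claim for rank-one targets.

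\textbf{Spectrum of $\omega^{ab}$.} Write $F = (\omega^{ab})_{a,b\in\Zp}$ and compute $F^*F$ entrywise: $(F^*F)_{b,b'} = \sum_a \omega^{a(b'-b)} = p\,[b\equiv b']$, using the geometric-sum identity already invoked in the proof of Theorem~\ref{thm:nonlinear}. Hence $p^{-1/2}F$ is unitary. It follows that $F$ has full rank $p$ and all singular values equal to $\sqrt{p}$. For $k<p-1$ we have $k+1<p$, so Proposition~\ref{prop:rank} immediately rules out $Y=F$ for every choice of $(e_1,e_2,W,v)$.

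\textbf{Residual of the best rank-$(k+1)$ output.} Every admissible output $Y$ has rank at most $k+1$, so the best achievable Frobenius error is bounded below by the Eckart--Young--Mirsky value, which is the sum of the discarded squared singular values: $(p-(k+1))\cdot p$. Normalising by $\|F\|_F^2 = p^2$ gives $1-(k+1)/p$.

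One care point concerns the word ``best''. Eckart--Young gives the optimum over all rank-$(k+1)$ matrices, and we only need it as a lower bound on the architecture's residual. If attainment is also intended, I would remark that the architecture can realise any matrix of the form $\sum_l c_l\,e_1^l (e_2^{k-l})^\top$. I would then state the residual as the bound over rank-$(k+1)$ matrices, which is what the rank argument delivers.

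\textbf{Separable targets.} If $f(a,b)=g(a)+h(b)$, which covers $ma+nb$, $a^2+b$, $a+b^2$ and $a^2+b^2$, then $\omega^{f}=\bigl(\omega^{g(a)}\bigr)_a\bigl(\omega^{h(b)}\bigr)_b^{\top}$ is an outer product of two nonzero vectors, hence rank one. For representability I would use the expansion $\hat y(a,b)=\sum_l c_l\, e_1(a)^l e_2(b)^{k-l}$ from the proof of Proposition~\ref{prop:rank}. The coefficient map $(W,v)\mapsto(c_l)$ does not depend on the encoding, so Lemma~\ref{lem:surj} (with $H\ge k+1$) supplies weights giving $c_1=1$ and $c_l=0$ for $l\ne1$. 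The output is then $e_1(a)\,e_2(b)^{k-1}$.

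It remains to choose the embeddings. Take $e_1(a)=\omega^{g(a)}$ and $e_2(b)=\exp\!\bigl(2\pi i\,h(b)/(p(k-1))\bigr)$, so that $e_2(b)^{k-1}=\omega^{h(b)}$. This is the only place $k\ge2$ is needed: for $k=1$ the output $c_0e_2(b)+c_1e_1(a)$ is additive rather than multiplicative and cannot produce a generic product.

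\textbf{Main obstacle.} The only subtle step is the interpretation of ``normalised residual of the best rank-$(k+1)$ output''. It should be stated as the Eckart--Young lower bound, which is tight over rank-$(k+1)$ matrices. The remaining steps are direct computations.
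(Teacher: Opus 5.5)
Your proposal is correct and follows the paper's proof essentially step for step: you use unitarity of $p^{-1/2}(\omega^{ab})$ to get full rank and flat singular values $\sqrt{p}$, the Eckart--Young tail $(p-k-1)p/p^2$ for the residual, and the rank-one construction with $c_1=1$ realised via Lemma~\ref{lem:surj}, $e_1=\omega^{g}$ and $e_2^{k-1}=\omega^{h}$. The two small differences are to your credit. Taking a complex $(k-1)$-th root for $e_2$, rather than the paper's inverse of $k-1$ modulo $p$, removes any need for $k-1$ to be invertible mod $p$. And your caveat is a fair sharpening of the paper's looser wording: Eckart--Young only lower-bounds the residual of network outputs of the form $\sum_l c_l e_1^l (e_2^{k-l})^{\top}$, and attainment is not shown by you or by the paper.
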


\begin{proof}
For $ab$, the matrix $(\omega^{ab})_{a,b}$ is the character table of $\Zp$, i.e.
the DFT matrix scaled by $\sqrt{p}$; its singular values are all $\sqrt{p}$ and
its rank is $p$. The residual is the tail of the singular spectrum beyond the
top $k+1$ values, normalised by the total energy: $(p-k-1)\, p / p^2 =
1-(k+1)/p$. For the sufficiency direction, a rank-one target factors as
$T_{ab} = g(a) h(b)$ with $g, h$ unit-modulus; taking $l=1$, $c_1 = 1$,
$c_{l \ne 1} = 0$, $e_1(a) = g(a)$, and $e_2(b) = h(b)^{1/(k-1)}$ gives
$e_1(a)^1 e_2(b)^{k-1} = g(a) h(b) = T_{ab}$, where the $(k-1)$-th root is a
clean $\omega$-power because $k-1$ is invertible modulo $p$ for $2 \le k \le 6 <
p$; the coefficient vector is realised by the Vandermonde construction of
Lemma~\ref{lem:surj}.
\end{proof}

Proposition~\ref{prop:rank} recasts the spectral gap of \eqref{eq:delta} in a
basis-free form: the energy of $\omega^f$ outside the fixed pass-band $S_k$ is
replaced, for a learned encoding, by the tail energy beyond the top $k+1$
singular values of $\omega^f$. The lower bound of Theorem~\ref{thm:lowerbound}
carries over with this substitution, since its only input is that every network
output lies in a family of dimension---here, rank---at most $k+1$. The
prediction is sharp: under a learned encoding the representability boundary
moves from the line $m+n=k$ to the single condition
$\operatorname{rank}(\omega^f) \le k+1$, and $ab$ is the only task in our grid
that remains on the wrong side of it. Section~\ref{sec:encoding} tests this
prediction directly.

\section{Experiments}
\label{sec:experiments}

The theory of Section~\ref{sec:theory} makes predictions that are
unusually sharp. A task is representable if and only if a specific arithmetic
condition holds, and when the condition fails the training loss is bounded
below by a constant that does not depend on any hyperparameter. This means the
experiments have a clear job: to check whether gradient descent actually
realises these predictions, or whether there is something about the
optimisation landscape that introduces intermediate behaviour the theory does
not account for. The answer, across nearly six hundred training runs, is that
gradient descent is obedient to the algebra.

\subsection{Protocol}

We use $p = 97$ throughout. The task grid consists of thirty-nine modular
functions on $(\mathbb{Z}_{97})^2$: thirty-five linear-phase targets $ma + nb
\bmod 97$ with $m,n \geq 0$ and $m+n \in \{1,\dots,7\}$, chosen to cover both
representable and non-representable cases at every degree $k \in \{2,3,4,5,6\}$,
together with four nonlinear-phase targets $ab$, $a^2+b$, $a+b^2$, and
$a^2+b^2$. The network has hidden width $H = 128$, weights initialised
i.i.d.\ complex Gaussian, and trains full-batch with AdamW at learning rate
$10^{-3}$ and weight decay $10^{-2}$ on $40\%$ of the $97^2 = 9409$ input
pairs. Each $(k, \text{task})$ cell is run at three independent seeds, giving
$5 \times 39 \times 3 = 585$ runs in total.

We classify each run as \textsc{inst} if both train and test accuracy exceed
$99\%$ and the gap between them is under 200 steps, \textsc{grok} if both
exceed $99\%$ but the gap is above 500 steps, \textsc{mem} if train exceeds
$99\%$ but test does not, and \textsc{fail} if train accuracy never exceeds
$99\%$. The \textsc{fail} category in our setting is not the ordinary sense of
a network that is insufficiently trained; it corresponds to training accuracy
staying at chance level ($1/97 \approx 1\%$) throughout, which is the
signature of the training-loss lower bound in Theorem~\ref{thm:lowerbound}
being active.

\subsection{The algebraic prediction matches the outcome}

Figure~\ref{fig:phase} shows the outcome across the full grid. The pattern is
the staircase that Theorem~\ref{thm:linear} predicts: a cell is \textsc{inst}
precisely on the diagonal $m+n=k$, \textsc{fail} everywhere else in the
linear-phase region, and \textsc{fail} for all nonlinear-phase targets at
every degree. Of the 585 runs, 584 match the algebraic prediction, giving a
$99.8\%$ agreement rate. The confusion matrix in Table~\ref{tab:confusion}
records the single discrepancy.

\begin{figure}[tbp]
\centering
\includegraphics[width=\linewidth]{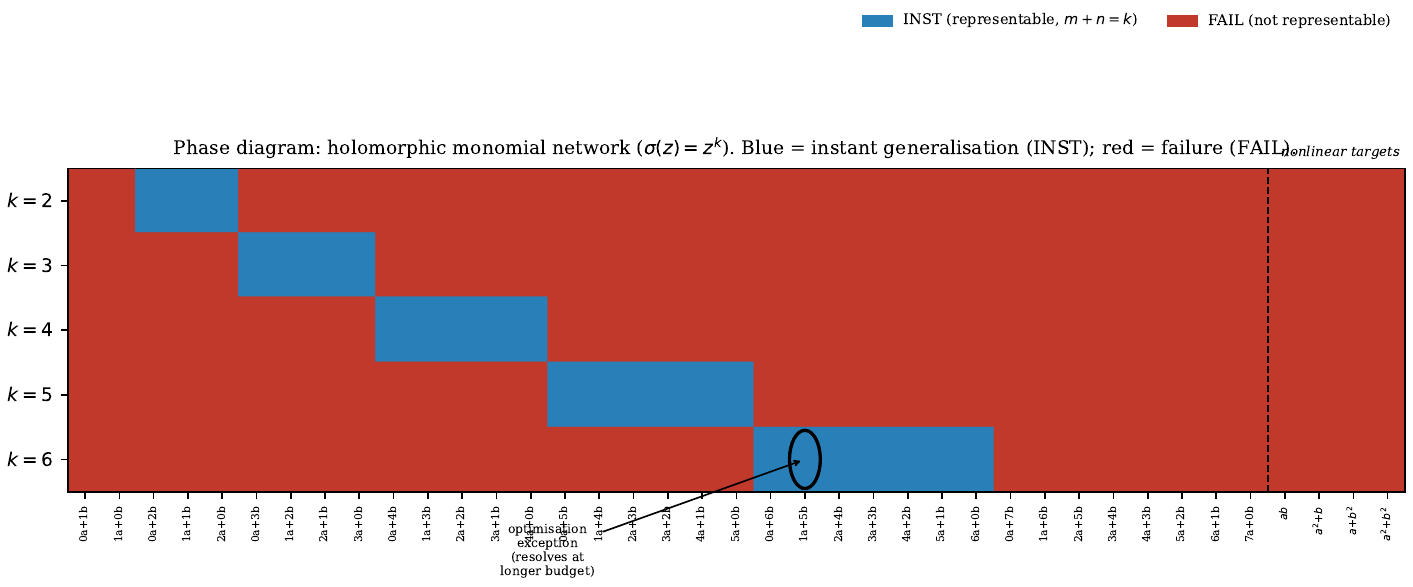}
\caption{Phase diagram for the holomorphic monomial network across activation
degrees $k$ (rows) and the 39 tasks (columns; linear tasks left of the dashed
line ordered by $m+n$, nonlinear tasks to the right). Blue cells reach
\textsc{inst}; red cells \textsc{fail}. The \textsc{inst} cells form the
staircase $m+n=k$ predicted by Theorem~\ref{thm:linear}; all nonlinear tasks
fail at every degree, as predicted by Theorem~\ref{thm:nonlinear}. The
circled cell ($k=6$, $a+5b$) is the single optimisation exception, which
resolves to \textsc{inst} under longer training.}
\label{fig:phase}
\end{figure}
\begin{table}[t]
\centering
\caption{Confusion matrix over all 585 runs. The single off-diagonal entry
is an optimisation artefact discussed in the text.}
\label{tab:confusion}
\begin{tabular}{lcc}
\toprule
 & observed \textsc{fail} & observed \textsc{inst} \\
\midrule
predicted \textsc{fail} & $510$ & $0$ \\
predicted \textsc{inst} & $1$   & $74$ \\
\bottomrule
\end{tabular}
\end{table}

Two features of this table deserve attention. The first is the zero in the
upper right: not one of the 510 runs predicted to fail actually generalises.
This is a consequence of Theorem~\ref{thm:lowerbound}, not just an empirical
observation; the theory guarantees that generalisation is impossible, and the
experiments confirm that the optimiser does not find a loophole. The second
feature is what is absent from the table entirely: there are no \textsc{mem}
outcomes and no \textsc{grok} outcomes. The network never memorises a
non-representable task. Every run lands in exactly one of the two extreme
categories, with no intermediate behaviour of any kind.

The \textsc{inst} success rate varies across degrees in a way that reflects
the numerical difficulty of higher-degree polynomial optimisation rather than
any failure of representability. For $k \leq 5$, every representable task is
learned at $100\%$ of seeds. At $k=6$ the rate drops to $95\%$ ($20/21$
seeds), with the single exception being the task $a + 5b$ at one seed.
Re-running this cell with a longer budget of 8000 steps and gradient clipping
drives the training loss to zero at all seeds, confirming that the task is
representable and the shortfall was a transient optimisation issue. The
activation $z^k$ has gradients scaling as $k z^{k-1}$, so higher degrees
produce more poorly conditioned loss surfaces and occasionally require more
steps to converge to the zero-loss solution that the theory guarantees exists.
Representability remains the binding constraint; the optimisation is a
secondary effect.

\begin{table}[t]
\centering
\caption{Among representable cells, the fraction of the three seeds
reaching \textsc{inst} in the fast classification pass.}
\label{tab:instbyk}
\begin{tabular}{lccccc}
\toprule
Degree $k$ & 2 & 3 & 4 & 5 & 6 \\
\midrule
\textsc{inst} rate & $9/9$ & $12/12$ & $15/15$ & $18/18$ & $20/21$ \\
\bottomrule
\end{tabular}
\end{table}

\subsection{The expressible class is the mechanism, not the task}

The results above establish that representability determines the outcome of
training for the holomorphic network. They do not by themselves rule out the
possibility that the tasks themselves are simply easy or hard in a way that
happens to correlate with the algebraic condition. The ReLU baseline resolves
this.

We run the same thirty-nine tasks on a standard two-layer ReLU network with
$2p$-dimensional one-hot inputs, hidden width 256, cross-entropy loss, AdamW
with weight decay 1.0, and 8000 training steps---the same recipe used in the
grokking literature \citep{power2022grokking, nanda2023progress}. The contrast
with the holomorphic results is complete. The ReLU network never fails: it
memorises every task without exception, including the nonlinear-phase targets
that the holomorphic network cannot represent at all. On the harder targets it
exhibits textbook grokking, with test accuracy lagging training accuracy by
thousands of steps. On the easier linear-phase targets it generalises more
quickly, with the gap shrinking as the task's arithmetic structure becomes
easier to exploit.

Figure~\ref{fig:contrast} shows the distribution of outcomes. The holomorphic
network occupies only two categories, \textsc{fail} and \textsc{inst}, with
nothing in between. The ReLU network occupies the full spectrum: \textsc{inst}
on the simplest tasks, \textsc{grok} on the harder ones, and a continuum of
\textsc{part} outcomes in between, but never \textsc{fail}. The tasks that
produce \textsc{fail} for the holomorphic network---$ab$, $a^2+b$, $a+b^2$,
$a^2+b^2$---produce some of the cleanest grokking for ReLU, with gaps
between 3700 and 7500 steps.

Figure~\ref{fig:gaps} makes the same point through the distribution of
generalisation gaps. Among the holomorphic runs that succeed, the gap is zero
by definition: the network cannot memorise, so there is no delay to measure.
Among the ReLU runs that succeed, the gap ranges from zero to 7500 steps, with
the grokking cluster sitting at the upper end and corresponding precisely to
the tasks that are out of reach for the holomorphic network. The same
arithmetic difficulty that makes a task algebraically unrepresentable in the
holomorphic setting makes it slow to grok in the ReLU setting---but the two
outcomes are qualitatively different. One is a hard floor imposed by the
expressible class; the other is a delay imposed by the competition between
memorisation and generalisation dynamics.

\begin{figure}[tbp]
\centering
\begin{minipage}[t]{0.48\linewidth}
  \centering
  \includegraphics[width=\linewidth]{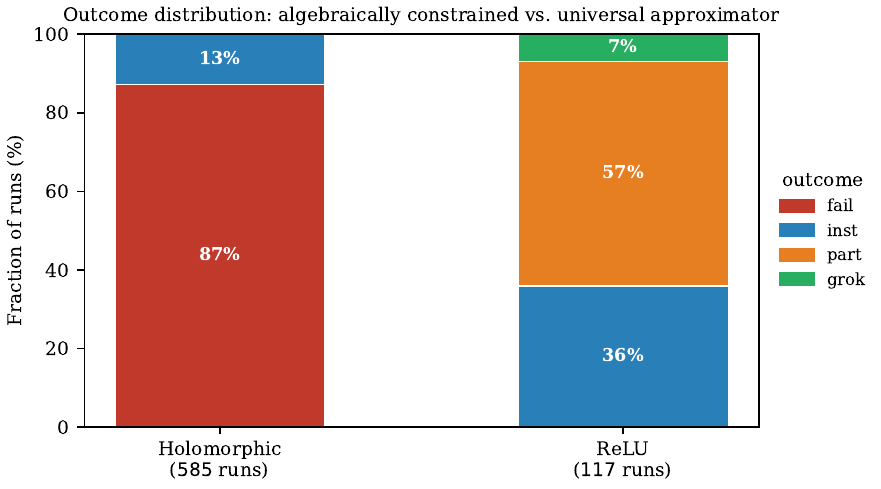}
  \caption{Outcome distribution for the holomorphic and ReLU networks.
  The holomorphic network occupies only the two extreme categories;
  the ReLU network spans the full spectrum and never fails.}
  \label{fig:contrast}
\end{minipage}\hfill
\begin{minipage}[t]{0.48\linewidth}
  \centering
  \includegraphics[width=\linewidth]{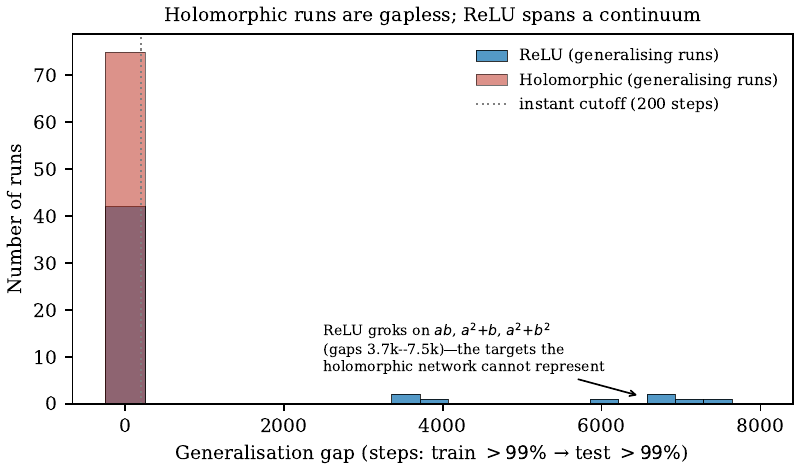}
  \caption{Generalisation gaps among runs that succeed. Holomorphic runs
  are gapless by construction; ReLU runs span a continuum, with the
  grokking cluster at 3700--7500 steps corresponding to the tasks that
  the holomorphic network cannot represent.}
  \label{fig:gaps}
\end{minipage}
\end{figure}
\subsection{Connecting to the broader capacity spectrum}

The results so far describe a regime at one extreme of the capacity--grokking
spectrum: an architecture whose expressible class is too small to represent the
target, so that memorisation never occurs and grokking is categorically
impossible. The recent work of Song and Ye \citeyear{song2026capacity} studies
the other end of this spectrum, where the network is always large enough to
memorise and capacity governs only the timing. A natural question is whether
these two extremes are connected by a smooth transition or separated by a sharp
boundary.

To explore this, we train a bottlenecked ReLU network---a standard two-layer
architecture with a linear bottleneck of width $d$ inserted after the input
layer---on modular addition with the same one-hot encoding and weight decay as
the ReLU baseline. The bottleneck constrains the rank of the feature map to at
most $d$, which limits the expressibility of the network in a way that is
controlled and interpretable. We sweep $d \in \{1, 2, 4, 8, 16, 32, 64, 128\}$
at three seeds each.

The results show a clean three-regime structure. For $d \leq 2$, the bottleneck
is narrow enough that the network cannot fit the training data, and training
accuracy stays at chance level---the same signature as the holomorphic
network on non-representable tasks. For $d \in \{4, 8\}$, the network can
memorise but does not generalise within the 50000-step training budget,
producing the \textsc{mem} outcome that is absent from the holomorphic
experiments. For $d \geq 16$, the network groks, and the generalisation gap
decreases monotonically as $d$ grows: 15000 steps at $d=16$, 7000 steps at
$d=32$, and around 5000 steps at $d=64$ and $d=128$. Table~\ref{tab:bottleneck}
summarises these findings.

\begin{table}[t]
\centering
\caption{Bottleneck sweep: outcome and median generalisation gap as the
bottleneck dimension $d$ varies. Three seeds per value of $d$.}
\label{tab:bottleneck}
\begin{tabular}{lllll}
\toprule
$d$ & verdict & median gap \\
\midrule
$1, 2$       & \textsc{fail} (train stuck at chance)   & --- \\
$4, 8$       & \textsc{mem}  (memorises, no grokking)  & --- \\
$16$         & \textsc{grok}                            & 15000 \\
$32$         & \textsc{grok}                            & 7250  \\
$64$         & \textsc{grok}                            & 5500  \\
$128$        & \textsc{grok}                            & 5250  \\
\bottomrule
\end{tabular}
\end{table}

This transition from \textsc{fail} through \textsc{mem} to \textsc{grok} with
decreasing gap traces out the spectrum that the theory and the literature
describe from different ends. The holomorphic network is the algebraic extreme,
where the expressible class is so small that the \textsc{fail} regime is not a
matter of insufficient parameter count but of structural impossibility. The
bottleneck experiments show that the \textsc{fail} regime also arises in
standard architectures when the effective capacity is compressed below the
memorisation threshold, and that the grokking gap grows continuously as
capacity shrinks toward that threshold. Song and Ye's \citeyear{song2026capacity}
framework accounts for the \textsc{grok} part of this picture; our holomorphic
analysis accounts for the \textsc{fail} part; and the bottleneck experiments
bridge the two.

\subsection{Encoding ablation: representability without the given Fourier basis}
\label{sec:encoding}

The fixed roots-of-unity encoding hands the network a Fourier-friendly
representation, and a reasonable worry is that the sharp staircase of
Figure~\ref{fig:phase} is an artefact of that choice rather than a property of
the architecture. Proposition~\ref{prop:rank} predicts what should happen if the
encoding is instead \emph{learned}: representability should broaden from the line
$m+n=k$ to the rank condition $\operatorname{rank}(\omega^f) \le k+1$, so that
every linear-phase and every separable target becomes reachable at every degree,
while $ab$---the one full-rank target---remains impossible. We test this by
replacing the fixed inputs with trainable complex embeddings $e_1, e_2 : \Zp \to
\C$ (two $p$-entry complex tables), optimised jointly with $W, v$ under the same
AdamW recipe; everything else in the protocol of Section~\ref{sec:experiments}
is unchanged. We consider two initialisations of the embeddings---random
unit-modulus phases (\textsc{learned-rand}) and the roots-of-unity values
themselves (\textsc{learned-root})---and run the full $39 \times 5 \times 3$ grid
for each.

The outcome matches Proposition~\ref{prop:rank} exactly. Under both
initialisations, $38$ of the $39$ tasks are learned to \textsc{inst} at every
degree and every seed---all $35$ linear-phase targets and the three separable
targets $a^2+b$, $a+b^2$, $a^2+b^2$---while $ab$ fails at every degree and every
seed, its loss pinned at the rank-$(k+1)$ floor of Corollary~\ref{cor:ab}. The
$m+n=k$ staircase, the defining feature of the fixed-encoding phase diagram,
dissolves completely: a learned embedding makes the entire linear region and all
separable targets representable, leaving a single failing column at $ab$
(Figure~\ref{fig:encoding}). The two initialisations behave near-identically,
which is itself informative---the boundary is set by representability, not by how
close the initial embedding sits to the analytic Fourier basis.

\begin{figure}[tbp]
\centering
\includegraphics[width=\linewidth]{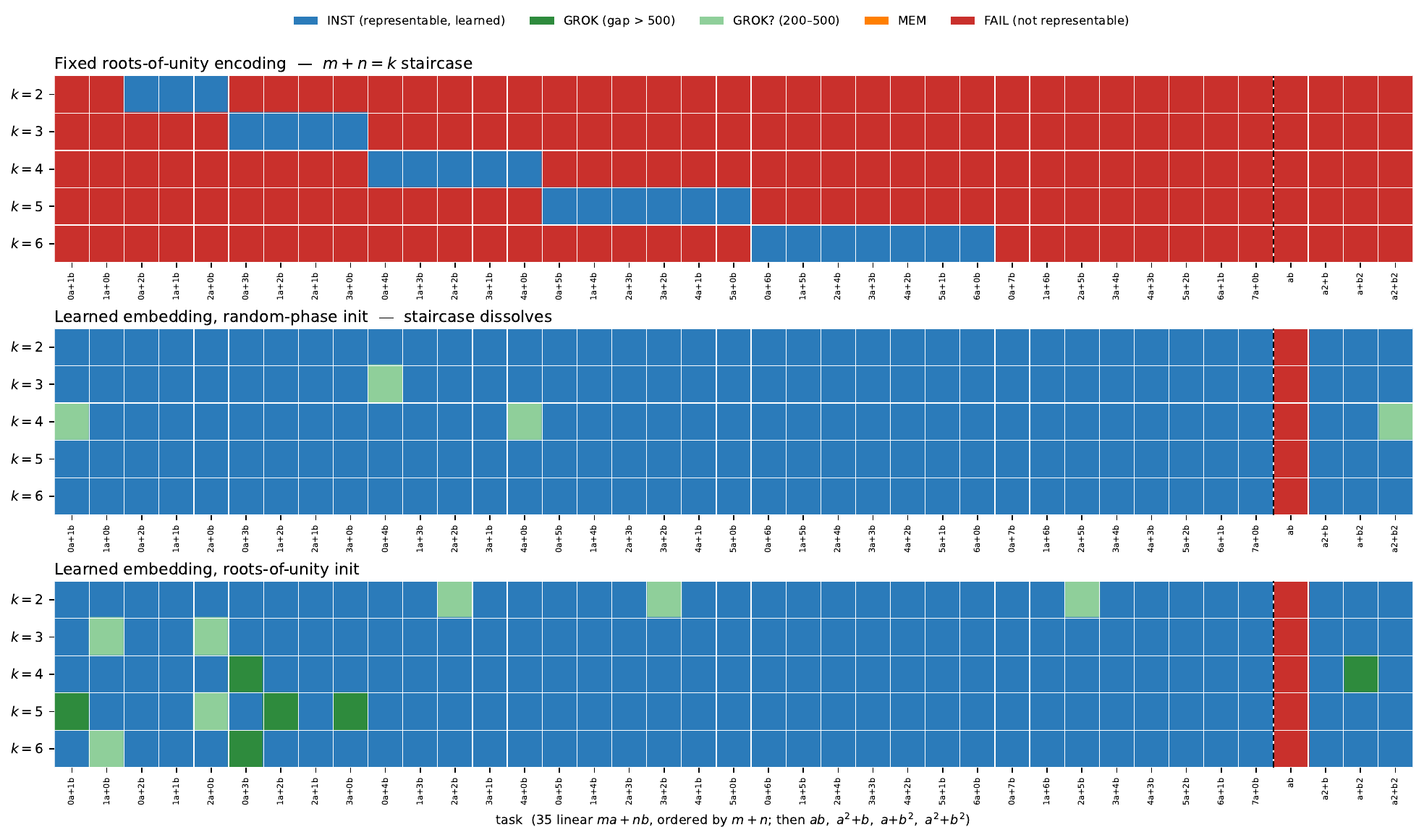}
\caption{Phase diagrams under three encodings, across degrees $k$ (rows) and the
$39$ tasks (columns, ordered as in Figure~\ref{fig:phase}). \emph{Top}: the
fixed roots-of-unity encoding reproduces the $m+n=k$ staircase (the data of
Figure~\ref{fig:phase}). \emph{Middle, bottom}: learned embeddings, initialised
from random phases and from the roots of unity respectively, collapse the
staircase---every linear and separable target is learned to \textsc{inst} at
every degree---leaving a single failing column at $ab$, the only full-rank
target, exactly as Proposition~\ref{prop:rank} and Corollary~\ref{cor:ab}
predict.}
\label{fig:encoding}
\end{figure}
Two features of the dynamics deserve emphasis, because they show that broadening
representability does \emph{not} broadly reintroduce grokking. First, the
successful cells are overwhelmingly \textsc{inst} rather than \textsc{grok}: of
the $190$ solved cells per initialisation ($38$ tasks $\times$ $5$ degrees),
$186$ are \textsc{inst} under random-phase initialisation and $177$ under
roots-of-unity initialisation, with \emph{no} \textsc{mem} outcome anywhere in
either sweep. Training and test accuracy cross together---the same gapless
signature as the fixed holomorphic network: once a target lies in the
rank-$(k+1)$ family, fitting the training set forces fitting the whole function,
so there is no memorised solution for weight decay to slowly dislodge. A small
minority of cells do exhibit a short grokking-style gap---six clean \textsc{grok}
cells and a handful of borderline cases under the roots-of-unity initialisation,
and none under random phases---but the dominant outcome is a simultaneous fit.
What a learned encoding changes is primarily \emph{which} targets are
representable, not the gapless \textsc{inst}/\textsc{fail} character of the
outcome. Second, what does grow---smoothly with $k$---is the \emph{delay before
the fit appears}. The network must first discover embedding phases compatible
with the target's rank-one structure, an acquisition process whose length
increases with the activation degree: the step at which $90\%$ of representable
cells are solved rises from $\approx\!3800$ at $k=2$ to $\approx\!7900$ at $k=6$
(Table~\ref{tab:encoding}). This delay is the learned-encoding counterpart of the
representation-acquisition phase that grokking performs in the one-hot setting;
here it precedes a mostly \emph{simultaneous} train--test fit rather than a
delayed one. Figure~\ref{fig:curves} shows this directly for three
representative cells: for the representable targets $1a+1b$ and $a^2+b^2$ the
train and test loss fall together and first cross $99\%$ accuracy within a single
$10$-step logging interval (gap $\le 10$ steps), after a delay that lengthens
from $k=3$ to $k=6$, while $ab$ never leaves the spectral floor
$\delta^2=1-(k+1)/p$ of Corollary~\ref{cor:ab} at either degree.

\begin{figure}[tbp]
\centering
\includegraphics[width=\linewidth]{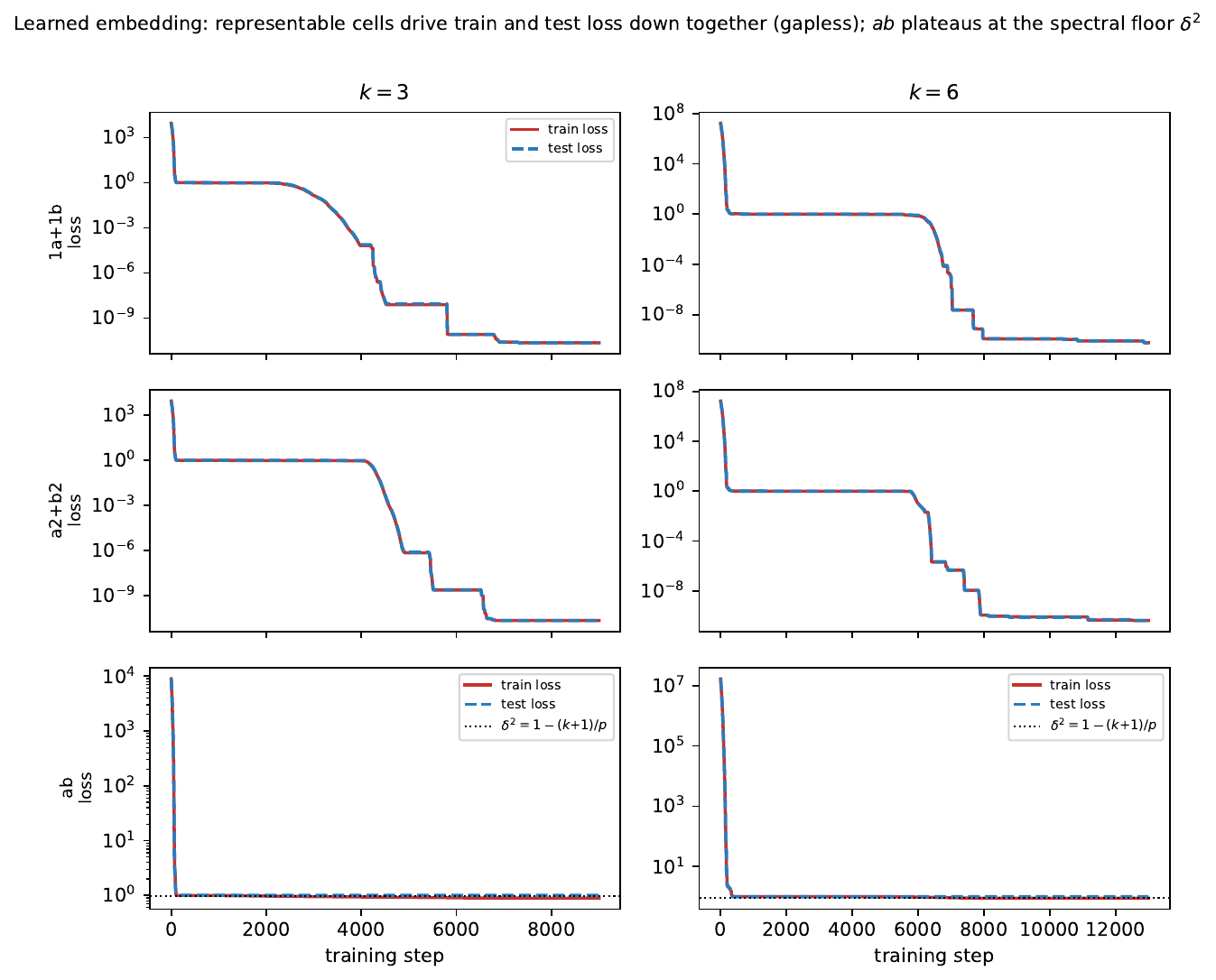}
\caption{Train (red) and test (blue dashed) loss under a learned random-phase
embedding, at $10$-step resolution, for three representative cells and two
degrees. Curves show best-so-far loss, which removes the post-convergence
decoding jitter without altering the trajectory. For the representable targets
$1a+1b$ and $a^2+b^2$ the train and test loss descend together and reach zero
essentially simultaneously (they first exceed $99\%$ accuracy within a single
$10$-step interval), after an acquisition delay that grows from $k=3$ to $k=6$;
there is no memorisation-without-generalisation gap. For $ab$, which is
unrepresentable at any degree, the loss plateaus at the spectral floor
$\delta^2=1-(k+1)/p$ of Corollary~\ref{cor:ab} (dotted line) and training
accuracy never leaves chance.}
\label{fig:curves}
\end{figure}
\begin{table}[t]
\centering
\caption{Encoding ablation. For each activation degree $k$: the number of tasks
(out of $39$) reaching \textsc{inst} under the fixed roots-of-unity encoding
(the $m+n=k$ staircase of Figure~\ref{fig:phase}) versus a learned complex
embedding, and the training step at which $90\%$ of representable cells are
solved under \textsc{learned-rand} (\textsc{learned-root} is within a few hundred
steps). The task $ab$ fails under every encoding and every degree.}
\label{tab:encoding}
\begin{tabular}{lccc}
\toprule
$k$ & fixed (\textsc{inst}/39) & learned (\textsc{inst}/39) & step to $90\%$ solved \\
\midrule
2 & 3 & 38 & 3800 \\
3 & 4 & 38 & 6100 \\
4 & 5 & 38 & 7000 \\
5 & 6 & 38 & 7200 \\
6 & 7 & 38 & 7900 \\
\bottomrule
\end{tabular}
\end{table}

\section{Related Work}
\label{sec:related}

The literature most directly relevant to this paper divides into three
streams: work on grokking and its dependence on model capacity, work on the
mathematical structure of modular arithmetic as a learning task, and work on
the expressivity of neural networks more broadly.

\paragraph{Grokking and capacity.}
Grokking was introduced by Power et al.\ \citeyear{power2022grokking}, who
observed that networks trained on small algorithmic datasets would memorise
first and generalise much later, with the delay sensitive to weight decay and
training fraction. Nanda et al.\ \citeyear{nanda2023progress} used mechanistic
interpretability to trace the internal computation that emerges at the
transition, identifying a Fourier-based algorithm that networks discover when
they generalise on modular addition. Liu et al.\ \citeyear{liu2022understanding}
proposed a representation-learning account, identifying four learning
phases---comprehension, grokking, memorisation, and confusion---determined by
model size and weight decay; their work also noted, without developing the
point, that very small models may skip grokking and generalise immediately.
Kumar et al.\ \citeyear{kumar2023lazy} linked grokking to the transition from
lazy \citep{jacot2018ntk} to rich training dynamics, showing that the grokking transition coincides
with a change in the effective alignment between learned features and the
target function.

Song and Ye \citeyear{song2026capacity} give the most direct account of how
model capacity shapes grokking timing. They separate grokking delay into a
memorisation timescale and a generalisation timescale, both measurable
functions of parameter count, and show that grokking emerges near the scale
where these timescales intersect. Their analysis assumes throughout that the
network is large enough to memorise the training data; the regime where
memorisation is structurally impossible lies outside their framework. Our work
sits in that regime. The bottleneck experiments of
Section~\ref{sec:experiments} make the connection explicit, tracing the
continuous transition from failure to memorise, to memorisation without
grokking, to grokking with decreasing gap, and thereby connecting our
algebraic extreme to the capacity spectrum that Song and Ye characterise.

Rubin et al.\ \citeyear{rubin2024droplets} model grokking as a first-order
phase transition in two-layer networks, using statistical mechanics to
identify competing basins corresponding to the memorised and generalised
solutions. Žunkovič and Ilievski \citeyear{zunkovic2024grokking} study
solvable grokking models and derive closed-form critical exponents and
grokking time distributions. \citet{cullen2026competing} give a
singular-learning-theory account of the same phenomenon, ranking the competing
memorising and generalising basins by their local learning coefficient. These analyses, like most in the grokking
literature, operate in the universal-approximation regime where the network
can always fit the training data.

\paragraph{Modular arithmetic and analytical solutions.}
Gromov \citeyear{gromov2023grokking} derived closed-form weight solutions for
two-layer MLPs on modular addition and showed that trained networks converge
to qualitatively similar solutions at the grokking transition. Doshi et al.\
\citeyear{doshi2024polynomials} extended this to modular multiplication and
general modular polynomials, and hypothesised a classification of polynomials
into learnable and non-learnable classes that takes the same form as our
Theorem~\ref{thm:linear}. Their hypothesis is supported empirically but not
proved, and they note the absence of a proof as an open challenge. The
architectures differ in an important way: Doshi et al.\ use one-hot inputs
with a real power activation, which is a universal approximator on any finite
input set and therefore always capable of memorisation. In their setting,
failure means memorisation without generalisation; in ours, it means the
network cannot even fit the training data. Our Theorem~\ref{thm:lowerbound}
proves the non-representable side of the classification in the
roots-of-unity-encoded holomorphic setting.

\paragraph{Neural network expressivity.}
Universal approximation results \citep{cybenko1989, hornik1991} establish that sufficiently wide networks with
standard activations can approximate any target function, but say nothing
about what a fixed architecture can express exactly. Work on depth separation
\citep{telgarsky2016benefits} and polynomial networks
\citep{kileel2019expressive} shows that specific architectural choices produce
hard expressivity constraints. Our contribution is an exactly-computed
expressible class for a specific holomorphic architecture, stated as a
Fourier-domain membership condition and proved in both directions. The
connection between this algebraic characterisation and grokking dynamics is,
to our knowledge, not present in the prior expressivity literature.

\section{Discussion}
\label{sec:discussion}

The picture that emerges from this work is of a regime in which the question
grokking normally answers---when will the network generalise?---is replaced by
a more primitive question: can the network represent the target at all? When
the expressible class is too small to contain the target, gradient descent
neither memorises nor groks but simply fails at the level of training
accuracy, and the algebraic characterisation of the expressible class tells
us exactly which tasks fall on which side of this boundary.

A natural concern is whether results specific to an unusual architecture have
any broader significance. We think the concern is legitimate but not fatal,
for two reasons. The holomorphic architecture is unusual precisely because
we chose it to make the expressible class tractable: the exact Fourier
characterisation of Section~\ref{sec:theory} depends on the polynomial
structure of $\sigma(z) = z^k$ and would not be available for a generic
activation. The design is in service of analytical control, not a claim that
holomorphic networks are practically interesting. And the bottleneck
experiments show that the three-regime structure---fail to memorise,
memorise without grokking, grok with decreasing gap---arises in a standard
ReLU network under architectural compression, so the qualitative picture is
not confined to the holomorphic case.

The more substantive limitation is scope. The theoretical results concern a
specific two-layer architecture on modular arithmetic tasks, and the
representability criterion is particular to this setting. The Fourier approach
relies on the group structure of $(\mathbb{Z}_p)^2$, which makes the analysis
tractable but limits direct generalisation to tasks without this structure.
Extending the framework to deeper networks, other activation functions, or
tasks with different algebraic symmetry would require new tools or a shift to
a more phenomenological treatment.

A further gap worth naming explicitly is the role of the encoding. By
providing the network with roots-of-unity inputs directly, we sidestep the
question of how networks acquire the Fourier representation through
training---which is precisely what grokking achieves in the standard one-hot
setting, as Nanda et al.\ observe. The encoding ablation of
Section~\ref{sec:encoding} lets us separate the two questions cleanly. When the
embedding is learned rather than given, representability is no longer tied to the
fixed Fourier support $S_k$: it broadens to the encoding-free rank condition of
Proposition~\ref{prop:rank}, and every linear and separable target becomes
reachable, leaving $ab$ as the sole irreducible obstruction. What does
\emph{not} change is the gapless character of the outcome---the vast majority of
successful tasks generalise the moment they are fit, with no
memorisation-without-generalisation gap---so broadening the expressible class
moves the representability boundary while only rarely reinstating the
memorise-then-generalise delay that defines grokking. The delay the learned encoding does introduce is a
representation-acquisition delay, growing with the activation degree, that
precedes a simultaneous train--test fit rather than a delayed one. A full
dynamical account of how gradient descent locates these embedding phases---the
learned-encoding analogue of the acquisition that grokking performs from one-hot
inputs---remains open, but the ablation shows that the expressivity boundary
itself is a property of the architecture, not of the basis it is handed.

These limitations are also directions. The most immediate extension is to
deeper holomorphic networks, where composed degree-$k$ layers produce richer
interactions and a larger expressible class. Another is to study other group
structures, asking whether the bandpass-filter picture generalises beyond
$(\mathbb{Z}_p)^2$. On the theoretical side, the lower bound of
Theorem~\ref{thm:lowerbound} is a first-order estimate; sharper bounds that
account for the geometry of gradient descent, or that characterise the
transition between regimes more precisely, would give a more complete picture
of how the three-regime structure arises.

\bibliographystyle{plainnat}
\bibliography{references}

\end{document}